\renewcommand{\paragraph}[1]{\vspace{.5em}\noindent\textbf{#1}}
\def\eqref#1{equation~\ref{#1}}
\def\1{\bm{1}}
\def\vf{{\bm{f}}}
\def\vg{{\bm{g}}}
\def\vn{{\bm{n}}}
\def\vu{{\bm{u}}}
\def\vv{{\bm{v}}}
\def\vx{{\bm{x}}}
\def\vy{{\bm{y}}}
\def\vz{{\bm{z}}}
\def\mI{{\bm{I}}}
\DeclareMathAlphabet{\mathsfit}{\encodingdefault}{\sfdefault}{m}{sl}
\SetMathAlphabet{\mathsfit}{bold}{\encodingdefault}{\sfdefault}{bx}{n}
\def\gN{{\mathcal{N}}}
\def\gO{{\mathcal{O}}}
\def\gU{{\mathcal{U}}}
\newcommand{\E}{\mathbb{E}}
\newcommand{\R}{\mathbb{R}}
\newcommand\numberthis{\addtocounter{equation}{1}\tag{\theequation}}
\def\vphi{{\bm{\phi}}}
\def\vF{{\bm{F}}}
\def\vG{{\bm{G}}}
\def\vI{{\bm{I}}}
\def\sg{{\operatorname{sg}}}
\def\vvG{{\vv_\text{G}}}
\def\vvN{{\vv_\text{N}}}
\theoremstyle{plain}
\newtheorem{theorem}{Theorem}[section]
\newtheorem{proposition}[theorem]{Proposition}
\theoremstyle{definition}
\theoremstyle{remark}
\newcommand{\tablestyle}[2]{\setlength{\tabcolsep}{#1}\renewcommand{\arraystretch}{#2}\centering\scriptsize}
\definecolor{cvprblue}{rgb}{0.21,0.49,0.74}
\title{Flow Map Distillation Without Data}
\author{Shangyuan Tong\thanks{Equal contribution, \textsuperscript{$\dagger$}Equal advising}\\
MIT\\
{\tt\small sytong@csail.mit.edu}
\and
Nanye Ma\textsuperscript{*}\\
NYU\\
{\tt\small nm3607@nyu.edu}
\and
Saining Xie\textsuperscript{$\dagger$}\\
NYU\\
{\tt\small saining.xie@nyu.edu}
\and
Tommi Jaakkola\textsuperscript{$\dagger$}\\
MIT\\
{\tt\small tommi@csail.mit.edu}
}
\begin{document}
\maketitle

\begin{abstract}
State-of-the-art flow models achieve remarkable quality but require slow, iterative sampling. To accelerate this, flow maps can be distilled from pre-trained teachers, a procedure that conventionally requires sampling from an external dataset. We argue that this data-dependency introduces a fundamental risk of \textbf{Teacher-Data Mismatch}, as a static dataset may provide an incomplete or even misaligned representation of the teacher's full generative capabilities. This leads us to question whether this reliance on data is truly necessary for successful flow map distillation. In this work, we explore a data-free alternative that samples only from the prior distribution—a distribution the teacher is guaranteed to follow by construction, thereby circumventing the mismatch risk entirely. To demonstrate the practical viability of this philosophy, we introduce a principled framework that learns to predict the teacher's sampling path while actively correcting for its own compounding errors to ensure high fidelity. Our approach surpasses all data-based counterparts and establishes a new state-of-the-art by a significant margin. Specifically, distilling from SiT-XL/2+REPA, our method reaches an impressive FID of \textbf{1.45} on ImageNet 256${\times}$256, and \textbf{1.49} on ImageNet 512${\times}$512, both with only 1 sampling step. We hope our work establishes a more robust paradigm for accelerating generative models and motivates the broader adoption of flow map distillation without data.\\
{\small\faGlobe}\;Project page: \href{https://data-free-flow-distill.github.io/}{data-free-flow-distill.github.io}
\end{abstract}

\section{Introduction}
\label{sec:intro}

\begin{figure}[ht]
    \centering
    \includegraphics[width=0.85\linewidth, trim={0 3cm 0 0}, clip]{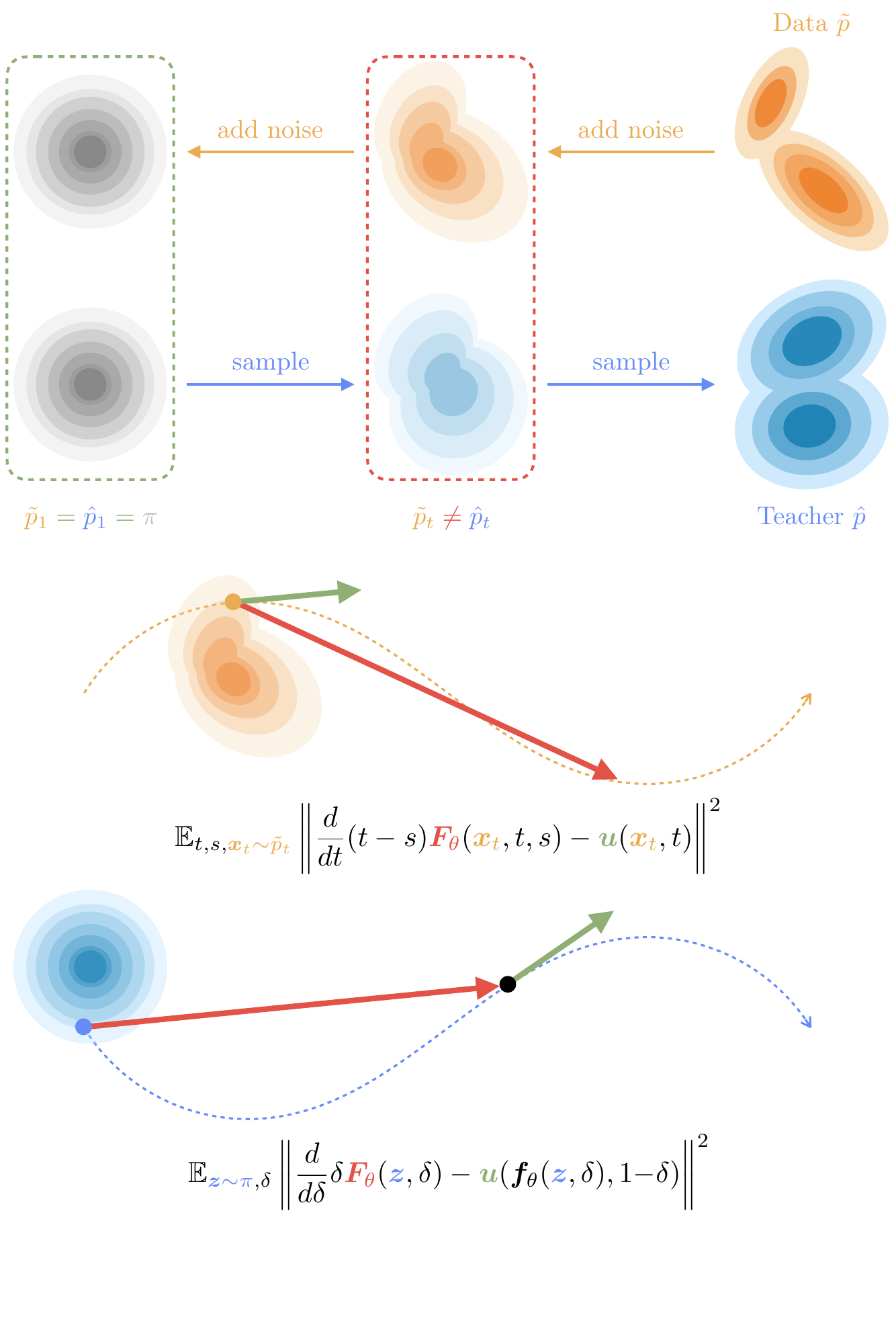}
    \caption{\textbf{Teacher-Data Mismatch and the data-free alternative.} (\ul{Top}) Conventional data-based distillation relies on intermediate distributions ({\color[HTML]{EAAC53} $\tilde{p}_t$}) derived from a static dataset, which could be misaligned with the teacher's true generative distributions ({\color[HTML]{688CF6} $\hat{p}_t$}). (\ul{Bottom}) The data-free paradigm, in contrast, samples only from the prior ($\pi$), the single distribution with guaranteed alignment, thereby circumventing the mismatch risk by construction.}
    \label{fig:teaser}
\end{figure}

Diffusion models~\citep{sohl2015deep,ho2020denoising,song2021scorebased,karras2022elucidating} and flow models~\citep{peluchetti2023non,liu2023flow,albergo2023building,lipman2023flow,heitz2023iterative,xu2022poisson,xu2023pfgm++} have revolutionized high-fidelity synthesis~\citep{ramesh2021zero,rombach2022high,watson2023novo,hoogeboom2022equivariant}, yet their reliance on numerically integrating an Ordinary Differential Equation (ODE) creates a significant computational bottleneck. To resolve this latency, flow maps~\citep{boffi2025flow}, which learn the solution operator of the ODE directly, offer a principled path to acceleration, bypassing iterative solving by taking large ``jumps'' along the generative trajectory. While flow maps can be trained from scratch~\citep{song2023consistency,frans2024one,geng2025mean,boffi2025build}, a more flexible alternative is to distill them from powerful, pre-trained teacher models~\citep{salimans2022progressive,berthelot2023tract,song2023consistency,kim2024consistency,sabour2025align}. This modular strategy allows for the compression of state-of-the-art models, which are often the product of advanced training~\citep{yu2024representation,leng2025repa,zheng2025diffusion} and post-training~\citep{xu2023imagereward,wallace2024diffusion,zhang2024large,liu2025flow,li2025mixgrpo} techniques.

We observe that the dominant and most successful flow map distillation approaches are \emph{data-based}, relying on samples from an external dataset to train the student. We argue that this tacitly accepted dependency introduces a fundamental risk of \textbf{Teacher-Data Mismatch}. As illustrated in \cref{fig:teaser}, a static dataset may provide an incomplete or misaligned representation of the teacher's true generative capabilities. This discrepancy arises frequently in practice: when a teacher generalizes beyond its original training set~\citep{song2025selective,kamb2024analytic,niedoba2024towards,scarvelis2023closed,pidstrigach2022score,yoon2023diffusion,yi2023generalization,kadkhodaie2023generalization}; when post-hoc fine-tuning~\citep{xu2023imagereward,wallace2024diffusion,zhang2024large,liu2025flow,li2025mixgrpo} shifts the teacher's distribution away from the original data; or when the teacher's proprietary training data is simply unavailable~\citep{rombach2022high,seedream2025seedream,labs2025flux,cao2025hunyuanimage,wu2025qwen}. In these scenarios, forcing a student to match the teacher on a misaligned dataset fundamentally constrains its potential.

Fortunately, this mismatch is not inevitable. We observe that while the teacher's generative paths may diverge from the dataset, they are, by definition, anchored to the prior distribution. As shown in \cref{fig:teaser}, the prior serves as the single point of guaranteed alignment: it is the shared origin for the teacher's generation and the termination point for any noising process. This insight leads us to question whether the common reliance on data is truly necessary. We posit that we can instead build a robust, \emph{data-free} alternative by sampling only from the prior, thereby circumventing the mismatch risk entirely by construction.

To operationalize this philosophy, we introduce a principled framework designed to track the teacher's dynamics purely from the prior. Our method takes a sample from the prior and a scalar integration interval, predicting where the flow should jump. We show that optimality is achieved when the model's \emph{generating velocity}, the rate at which it traverses its own path, aligns with the teacher's instantaneous velocity. Nevertheless, like any autonomous numerical solver, this prediction process is susceptible to compounding errors. To mitigate this, we propose a correction mechanism that further aligns the model's \emph{noising velocity}, the marginal velocity of the noising flow implied by the student's predicted distribution, back to the teacher. We name our proposal \textbf{FreeFlow}, emphasizing its defining characteristic as a completely data-free distillation framework for flow maps.

We validate our approach through extensive experiments on ImageNet~\citep{russakovsky2015imagenet}. Distilling from a SiT-XL/2+REPA~\citep{yu2024representation} teacher, FreeFlow establishes a new state-of-the-art, reaching an impressive FID of \textbf{1.45} on 256${\times}$256 and \textbf{1.49} on 512${\times}$512 with 1 function evaluation (1-NFE), significantly surpassing all data-based baselines. Furthermore, by leveraging its nature as a fast, consistent proxy, FreeFlow enables efficient inference-time scaling~\citep{ma2025inference,singhal2025general}, allowing for the search of optimal noise samples in a single step. Ultimately, our findings confirm that an external dataset is not an essential requirement for high-fidelity flow map distillation, and the risk of Teacher-Data Mismatch can be avoided entirely without compromising performance. We believe this work provides a more robust foundation for generative model acceleration and motivates a shift toward the data-free paradigm.

\section{Preliminaries}
\label{sec:background}

\paragraph{Diffusion and flow.}
Diffusion models~\citep{sohl2015deep,ho2020denoising,song2021scorebased,karras2022elucidating} and flow models~\citep{xu2022poisson,peluchetti2023non,liu2023flow,albergo2023building,lipman2023flow,heitz2023iterative,xu2023pfgm++} are trained to reverse a reference noising process that transports the data distribution $p\equiv p_0$ to a easy-to-sample prior distribution $\pi \equiv p_1$ like $\gN(\bm{0}, \mI)$. We denote the interpolating distributions in between as $p_t$ and their samples $\vx_t$, indexed by time $t\in[0,1]$. For the linear interpolation scheme~\citep{lipman2023flow,liu2023flow,ma2024sit} that we utilize throughout the paper, given a pair of terminal samples $\vx \sim p$ and $\vz\sim \pi$, we construct the noising process from its interpolants, $\vx_t = \vI_t(\vx, \vz) \coloneq (1-t)\vx+t\vz$, which in turn defines a conditional velocity, pointing in the direction from prior to data, $\vu(\vx_t,t\mid\vx,\vz) \coloneq -\partial_t \vI_t(\vx,\vz) = \vx-\vz$. Taking the expectation over $p$ and $\pi$, we arrive at the marginal instantaneous velocity, $\vu:\R^d\times[0,1]\to \R^d$, a vector field that dictates how the samples evolve, which governs the noising process with the following ODE:
\begin{align*}
    d \vx(t) = -\vu(\vx(t), t) dt,\numberthis\label{eq:ode}
\end{align*}
where $\vx(t)\in\R^d$ denotes the state of the system. In practice, the typically unknown $\vu$ can be well approximated by a model $\vg_\psi$ with parameters $\psi$, trained with denoising score matching~\citep{vincent2011connection,song2019generative} or conditional flow matching~\citep{peluchetti2023non,lipman2023flow}:
\begin{align*}
    \E_{\vx,\vz,t}\left\|\, \vg_\psi(\vI_t(\vx, \vz),t) - \vu(\vI_t(\vx, \vz),t\mid\vx,\vz) \,\right\|^2.\numberthis\label{eq:flow_loss}
\end{align*}
For sampling, we need to solve \cref{eq:ode} by integrating the flow backward in time:
\begin{align*}
    \vphi_\vu(\vx_t,t,s) = \vx_t + \int_t^{s} -\vu(\vx(\tau), \tau)d\tau,\numberthis\label{eq:solution}
\end{align*}
where $\vphi_\vu:D_\text{flow}\to\R^d, D_\text{flow}=\{(\vy,\zeta,\xi)\mid\vy\in\R^d,\zeta\in[0,1],\xi\in[0,\zeta]\}$, denotes the generating flow equipped with underlying velocity field $\vu$, and $t-s$ is the integration time interval. The standard sampling procedure of flow models corresponds to calculating $\vphi_\vu(\vz,1,0)$, $\vz\sim \pi$. In practice, we resort to some numerical solver, like Euler~\citep{song2020denoising,song2021scorebased} and Heun methods~\citep{karras2022elucidating}. Since the underlying trajectories typically exhibit complicated structure and curvature~\citep{xiao2022tackling,xu2023stable}, such numerical integration procedures often require dozens or even hundreds of NFEs for a single generation.

\paragraph{Flow maps.}
Instead of approximating the instantaneous velocity $\vu$, a flow map model~\citep{salimans2022progressive,berthelot2023tract,song2023consistency,kim2024consistency,frans2024one,geng2025mean,sabour2025align,boffi2025flow}, $\vf_\theta$ parameterized by $\theta$, is trained to directly approximate $\vphi_\vu$. Existing works dissect and utilize key properties of $\vphi_\vu$ to construct their training objective, typically via the local dynamics described by $\vu$. For example, in MeanFlow~\citep{geng2025mean}, the network $\vF_\theta:D_\text{flow}\to\R^d$ represents the average velocity $\vf$ travels over its path: $\vf_\theta(\vx_t,t,s) = \vx_t + (t-s) \vF_\theta(\vx_t,t,s)$. At optimality, we know from \cref{eq:solution} that $(t-s)\vF_{\theta^*}(\vx_t,t,s) = \int_t^{s}-\vu\left(\vx(\tau),\tau\right)d\tau$. Differentiating both sides \wrt $t$ leads to
\begin{align*}
    \vF_{\theta^*}(\vx_t,t,s) + (t-s)\frac{d}{dt}\vF_{\theta^*}(\vx_t,t,s) = \vu(\vx_t,t),\numberthis\label{eq:mf_F_identity}
\end{align*}
where $\frac{d}{dt}\vF$ is the total derivative that can be further expanded into $\nabla_{\vx_t}\vF\frac{d\vx_t}{dt}+\partial_t\vF$. This identity then motivates the following practical training objective:
\begin{align*}
    \E_{t,s,\vx_t} \left\|\, \vF_\theta(\vx_t,t,s) - \sg\left(\vu_\text{MF}\right) \,\right\|^2,\numberthis\label{eq:mf_loss}
\end{align*}
where $\sg(\cdot)$ denotes the stop-gradient operation, and $\vu_\text{MF}=\vu(\vx_t,t) - (t-s)\frac{d}{dt}\vF_{\theta}(\vx_t,t,s)$. The decision to drop the remaining gradients is mostly empirical and common in prior literature~\citep{song2024improved,lu2024simplifying,frans2024one,geng2025mean}, as it results in faster, less resource-demanding, and often more stable training. Here, $\vu$ is either co-trained from scratch~\citep{geng2025mean} together with \cref{eq:flow_loss}, or a pre-trained flow model~\citep{song2023consistency,sabour2025align,peng2025flow} in a procedure known as flow map distillation. In this paper, we focus on the second case, since this modular approach allows for easier incorporation of advanced training~\citep{yu2024representation,leng2025repa,zheng2025diffusion} and post-training~\citep{xu2023imagereward,wallace2024diffusion,zhang2024large,liu2025flow,li2025mixgrpo} techniques.

\begin{figure}[t!]
    \centering
    \includegraphics[width=0.9\linewidth]{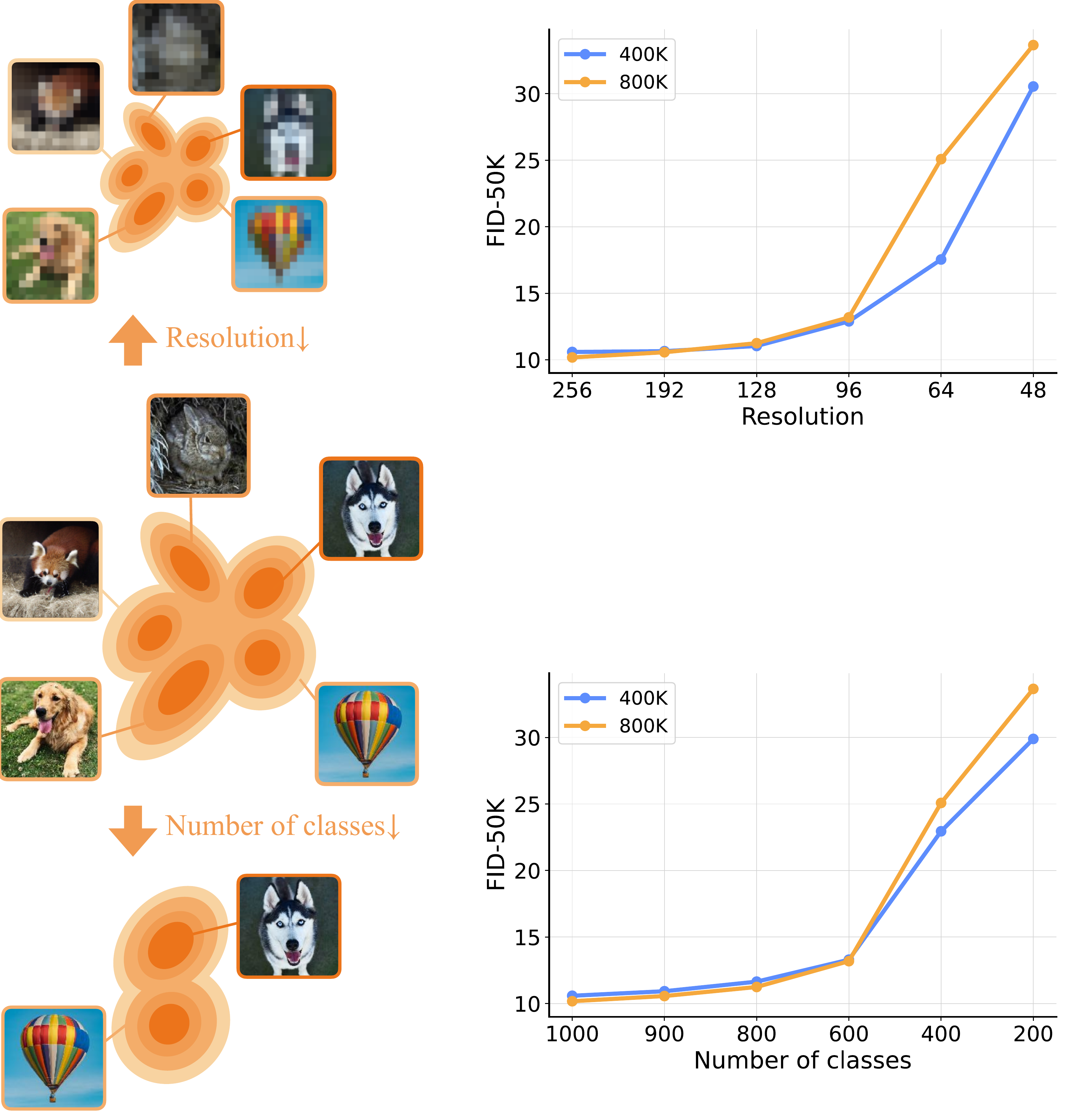}
    \caption{\textbf{Impact of Teacher-Data Mismatch.} With a fixed teacher model, increasing augmentation induces a more severe mismatch between teacher and data, degrading student performance.}
    \label{fig:mismatch}
    \vspace{-0.3cm}
\end{figure}

\begin{figure*}[t]
    \centering
    \includegraphics[width=0.8\linewidth]{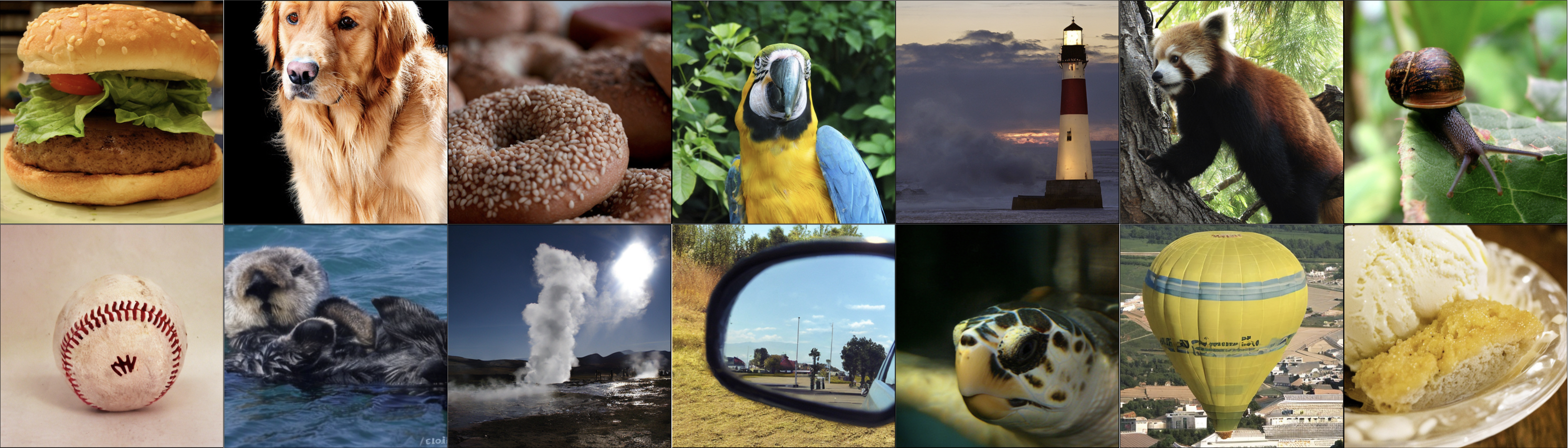}
    \caption{\textbf{Selected samples from FreeFlow-XL/2 model at 512${\times}$512 resolution with 1-NFE.} More uncurated results are in \cref{sec:visual}.}
    \vspace{-0.3cm}
\end{figure*}

\section{With or Without Data?}
\label{sec:motivation}

The goal of flow map distillation is to create a student $\vf_\theta$ that faithfully reproduces the full generative process of the given $\vphi_\vu$, just with fewer NFEs. Intuitively, existing methods learn the teacher sampling dynamics at a series of intermediate states $\vx_t$ (note the expectation over $\vx_t$ in \cref{eq:mf_loss}). Our discussion begins from this very point, with a closer inspection of a foundational, yet largely unexamined, element: the underlying distribution from which these states $\vx_t$ are drawn.

\subsection{The Risk of Teacher-Data Mismatch}
\label{sec:mismatch}

The distribution of $\vx_t$ is conventionally formed by sampling from a \emph{data-noising} distribution, which we denote as $\tilde{p}_t$. This is the set of all interpolants $\vI_t(\tilde{\vx}, \vz)$ generated by taking a data point $\tilde{\vx} \sim \tilde{p}$\footnote{We use $\tilde{p}$ to denote the dataset available at distillation time. As we will discuss, it may differ from $p$.} and a prior sample $\vz \sim \pi$. Although tacitly accepted, this practice implicitly assumes that $\tilde{p}_t$ is a suitable representation of the states the teacher model follows over its sampling trajectory.

We note that the teacher defines a distinct set of intermediate states via \cref{eq:solution}. The set of all $\vx_t$ along these solution paths constitutes the true \emph{teacher-generating} distribution, which we denote as $\hat{p}_t$. For the student to perfectly reproduce the teacher, it should be trained to match the teacher's dynamics over $\hat{p}_t$. The central problem that we identify in this paper, which we term the \textbf{Teacher-Data Mismatch}, is that these two distributions are not equivalent: $\tilde{p}_t \neq \hat{p}_t$.

By training on $\tilde{p}_t$, the student is compelled to learn the teacher's dynamics only on trajectories that are anchored to the static dataset $\tilde{p}$. Any generative behavior of the teacher that starts from $\pi$ and evolves through states not well-represented by $\tilde{p}_t$ will be systematically ignored during training. Consequently, even a perfectly converged student is not guaranteed to reproduce the teacher's outputs, as it has fundamentally been trained to distill the wrong process.

To examine and validate the impact of the discussed mismatch, we design a controlled experiment on ImageNet, where we introduce deliberately misaligned $\tilde{p}_t$ distributions by applying data augmentations during the training of conventional flow map distillation. As shown in \cref{fig:mismatch}, the quality of the learned flow map is highly sensitive to the fidelity and representativeness of the distillation dataset: stronger augmentation leads to a larger discrepancy between $\hat{p}_t$ and $\tilde{p}_t$, and, in turn, results in a more significant degradation in student performance.

This mismatch is not merely a theoretical curiosity; it manifests in several common and critical scenarios. First, when a powerful teacher model has generalized beyond its training set~\citep{song2025selective,kamb2024analytic,niedoba2024towards,scarvelis2023closed,pidstrigach2022score,yoon2023diffusion,yi2023generalization,kadkhodaie2023generalization}, or even when it simply employs the widely adopted Classifier-Free Guidance (CFG)~\citep{ho2022classifier}, its generative distribution $\hat{p}_0$ will contain novel, extrapolated samples not present in $p$, causing its trajectories $\hat{p}_t$ to necessarily diverge from the data-noising paths $p_t$. Second, if the teacher has been altered by post-hoc fine-tuning~\citep{xu2023imagereward,wallace2024diffusion,zhang2024large,liu2025flow,li2025mixgrpo}, its generating flow is deliberately modified, again forcing $\hat{p}_t$ to diverge from the original data-noising distribution. Third, a teacher model may be released publicly while its massive, proprietary training data is not~\citep{rombach2022high,seedream2025seedream,labs2025flux,cao2025hunyuanimage,wu2025qwen}. In this case, $p$ is simply unavailable, and any proxy dataset used will almost certainly create a severe mismatch.

\subsection{Towards a Data-Free Alternative}
\label{sec:data_free}

A straightforward remedy to the Teacher-Data Mismatch is to directly sample from $\hat{p}_t$ during training. This would involve sampling $\vz \sim \pi$, integrating the teacher model from $t=1$ to a random time $t$ to get $\vx_t = \vphi_\vu(\vz, 1, t)$, and then using this $\vx_t$ in the distillation loss. Just like the case of knowledge distillation~\citep{hinton2015distilling,luhman2021knowledge,zheng2023fast,liu2023flow}, where the model is trained to learn the fully integrated outcomes at $t=0$, obtaining reference trajectories on-the-fly is prohibitively costly, whereas pre-computing them offline scales poorly. In short, for high-dimensional or complex conditional tasks, generating enough samples to adequately represent the underlying distribution simply becomes intractable.

This apparent impasse leads us to re-examine the properties of these two distributions. While $\hat{p}_t$ and $\tilde{p}_t$ diverge for $t \in [0, 1)$, they are, by construction, identical at $t=1$. The data-noising process terminates at the prior distribution (\ie, $\tilde{p}_1 \equiv \pi$), and the teacher's generative process begins at the same prior (\ie, $\hat{p}_1 \equiv \pi$). This observation provides a crucial foothold. The prior $\pi$ is the one distribution we can sample from that is guaranteed to be on-distribution for the teacher's generative process, completely circumventing the risk of Teacher-Data Mismatch.

This insight motivates our central question: Is the commonly followed data-dependency truly necessary for flow map distillation? We argue that it is not. In the following, we explore a data-free alternative, building a new flow map distillation objective governed only by the prior distribution.

\section{Flow Map Distillation Without Data}
\label{sec:method}

Our exploration now moves from motivation to mechanism. A flow map model can be generally understood as directly modeling a segment of a full generative trajectory, and the core principle for training such a model is to enforce consistency with $\vu$ at some point along this segment, ensuring the learned dynamics are locally correct. The segment's two key points provide natural candidates: a sampled start-point, $\vx_t$, and a predicted end-point, $\vx_s = \vf_\theta(\vx_t, t, s)$.

This perspective provides a clear lens through which to view the distillation process. In the conventional, data-based setting, the start-point $\vx_t$ is drawn from a series of data-noising distributions $\tilde{p}_t$. It is thus natural to constrain the model by perturbing this start-point, which corresponds to differentiating the optimal condition, $(t-s)\vF_{\theta^*}(\vx_t,t,s) = \int_t^{s}-\vu\left(\vx(\tau),\tau\right)d\tau$, with respect to $t$. This operation leads to the MeanFlow identity~\citep{geng2025mean} in \cref{eq:mf_F_identity}, which effectively enforces consistency at the start of the segment.

In our data-free investigation, however, we only sample our start-point from the prior $\pi$, which fixes $\vx_t = \vz$ at $t=1$. Consequently, perturbing the start-time $t$, and in turn the start-point, is no longer a meaningful operation. Thus, we consider the symmetrical alternative: if we cannot enforce consistency by perturbing the sampled start-point, we can instead do so by perturbing the predicted end-point. This provides a different path to ensuring the student's local dynamics are correct, and it corresponds to differentiating the optimal condition with respect to the end time $s$.

To formalize this, we first simplify our notation to reflect this prior-anchored ($t=1$) view. That is: (1) We define the integration duration as $\delta = t-s = 1-s$, where $\delta \in [0, 1]$; (2) The flow map $\vf_\theta(\vz, \delta): \R^d\times[0,1]\to\R^d$ approximates the true flow $\vphi_\vu(\vz, 1, 1-\delta)$; (3) The average velocity $\vF_\theta(\vz, \delta): \R^d\times[0,1]\to\R^d$ is linked by the parameterization $\vf_\theta(\vz, \delta) = \vz + \delta\vF_\theta(\vz, \delta)$. The optimal condition, anchored at $t=1$, thus reduced to:
\begin{align*}
    \delta\vF_{\theta^*}(\vz,\delta) = \int_1^{1-\delta}-\vu\left(\vx(\tau),\tau\right)d\tau.\numberthis\label{eq:F_optimal}
\end{align*}
Following our exposition, we differentiate both sides of \cref{eq:F_optimal} \wrt $\delta$ (equivalent to differentiating \wrt $-s$):
\begin{align*}
    \vF_{\theta^*}(\vz,\delta) + \delta\partial_\delta\vF_{\theta^*}(\vz,\delta) = \vu\left(\vf_{\theta^*}(\vz,\delta),1-\delta\right).\numberthis\label{eq:F_identity}
\end{align*}
\cref{eq:F_identity} differs from \cref{eq:mf_F_identity} in subtle ways: (1) The time derivative of $\vF_\theta$ is just a partial derivative, as $\vz$ does not depend on $\delta$; (2) $\vu$ is evaluated at a state predicted by $\vf_\theta$.

The identity defined in \cref{eq:F_identity} provides a sufficient condition for optimality, which motivates the following loss:
\begin{align*}
    \E_{\vz,\delta}\left\|\, \vF_\theta(\vz,\delta) - \sg( \vu_\text{target}) \,\right\|^2,\numberthis\label{eq:p_loss_c}
\end{align*}
where $\vu_\text{target} = \vu\left(\vf_{\theta}(\vz,\delta),1-\delta\right) - \delta\partial_\delta \vF_{\theta}(\vz,\delta)$. Remarkably, we verify that \cref{eq:p_loss_c} is formulated by only sampling from the prior $\pi$, without any reliance on an external dataset $\tilde{p}$ and thus free from the risks of Teacher-Data Mismatch. Hence, it achieves the goal of our exploration.

\subsection{Predict With Generating Flows}
\label{sec:predict}

We now analyze our proposed objective in \cref{eq:p_loss_c}. Note that $\partial_\delta \vf_\theta(\vz,\delta)$, the model prediction's rate of change with respect to the integration time, is the velocity with which the model travels along its generating flow. Thus, the optimality of the student is equivalent to the alignment between the model's \emph{generating velocity} and the underlying velocity. Indeed, it is easy to see that the loss value of \cref{eq:p_loss_c} is the same as $\E_{\vz,\delta}\| \partial_\delta \vf_{\theta}(\vz,\delta) - \vu(\vf_{\theta}(\vz,\delta),1-\delta) \|^2$, which evaluates to 0 if and only if $\partial_\delta \vf = \vu$. Intuitively, the student is analogous to an autonomous ODE solver, which uses its current estimated state to query the derivative function and compute the next state. The student learns to ``ride'' the teacher's vector field, starting from $\pi$ and extending outward, step by step, based entirely on its own evolving predictions.

In practice, $\partial_\delta \vF_\theta$ can be calculated easily and efficiently via Jacobian-vector product (JVP) with forward-mode automatic differentiation, barring some advanced computation kernels, which currently require customized solutions~\citep{lu2024simplifying}. Still, it is desirable to work with a more flexible loss function with no such limitations, which is why we derive a discrete-time alternative (detail in \cref{sec:ext_pred}) that numerically approximates $\partial_\delta \vF_\theta$ with finite differences. 

Consequently, we abstract away the computation detail, and use the general notation $\vvG(\vf_\theta(\vz,\delta), 1-\delta)$ to denote the student's generating velocity $\partial_\delta \vf_\theta(\vz,\delta)$. The understanding that \cref{eq:p_loss_c} aligns $\vvG$ and $\vu$ can also be observed from its optimization gradients:
\begin{align*}
    \nabla_\theta\, \E_{\vz,\delta}\biggl[ \vF_\theta(\vz,\delta)^\top \sg\Bigl( \Delta_{\vvG,\vu}(\vf_{\theta}(\vz,\delta),1-\delta) \Bigr) \biggr],\numberthis\label{eq:p_grad}
\end{align*}
where $\Delta_{\vvG,\vu}(\vf_{\theta}(\vz,\delta),1-\delta)$ is the difference between $\vvG(\vf_{\theta}(\vz,\delta),1-\delta)$ and $\vu(\vf_{\theta}(\vz,\delta),1-\delta)$. Explicitly writing out the gradients makes it easier to adopt techniques like gradient weighting/normalization explored in \cref{sec:design}. Note that, if $\vF_\theta$ is further parameterized, we should replace the first term in \cref{eq:p_grad} with the actual network output to ensure effective gradient control by only modifying $\Delta_{\vvG,\vu}$.

Lastly, we note that advanced sampling techniques can be easily incorporated in $\vu$, \eg, for classifier-free guidance (CFG)~\citep{ho2022classifier}, we could simply replace $\vu(\vx_t,t\mid c)$\footnote{We omit $c$ everywhere else in the paper for simplicity.} with $\vu_\gamma(\vx_t,t\mid c) = \gamma \cdot \vu(\vx_t,t\mid c) + (1 - \gamma) \cdot \vu(\vx_t, t\mid c=\emptyset)$, where $c$ is condition with $\emptyset$ referring to a null input, and $\gamma$ is the guidance strength. Furthermore, the model can be trained on a range of $\gamma$ values, which enables the ability to effortlessly change the guidance strength at inference time.

\paragraph{The Challenge of Error Accumulation.} In practice, the student model $\vf_\theta$ is only a learned approximation, not a mathematically perfect one. At any $\delta$, its prediction $\vf_\theta(\vz, \delta)$ may contain a small approximation error, placing it slightly off the true teacher trajectory $\vphi_\vu(\vz, 1, 1-\delta)$. Because the objective is self-referential, this small deviation influences the target used for subsequent steps. The student queries the teacher at its current and potentially slightly erroneous state, and the resulting velocity target, while correct for that state\footnote{This assumes the teacher is perfect, which is not true in practice.}, may not guide the student back toward the true path. Such errors can compound as the integration proceeds from $\delta=0$ to $\delta=1$. In \cref{fig:err_acc}, we measure the relative differences between the student's predicted trajectory and the teacher's true sampling path, which empirically quantifies and confirms such a phenomenon as the student progressively diverges from the teacher when $\delta$ increases.

\begin{figure}[t]
    \centering
    \begin{minipage}[t]{0.47\linewidth}
        \vspace{0pt} %
        \centering
        \includegraphics[width=\linewidth]{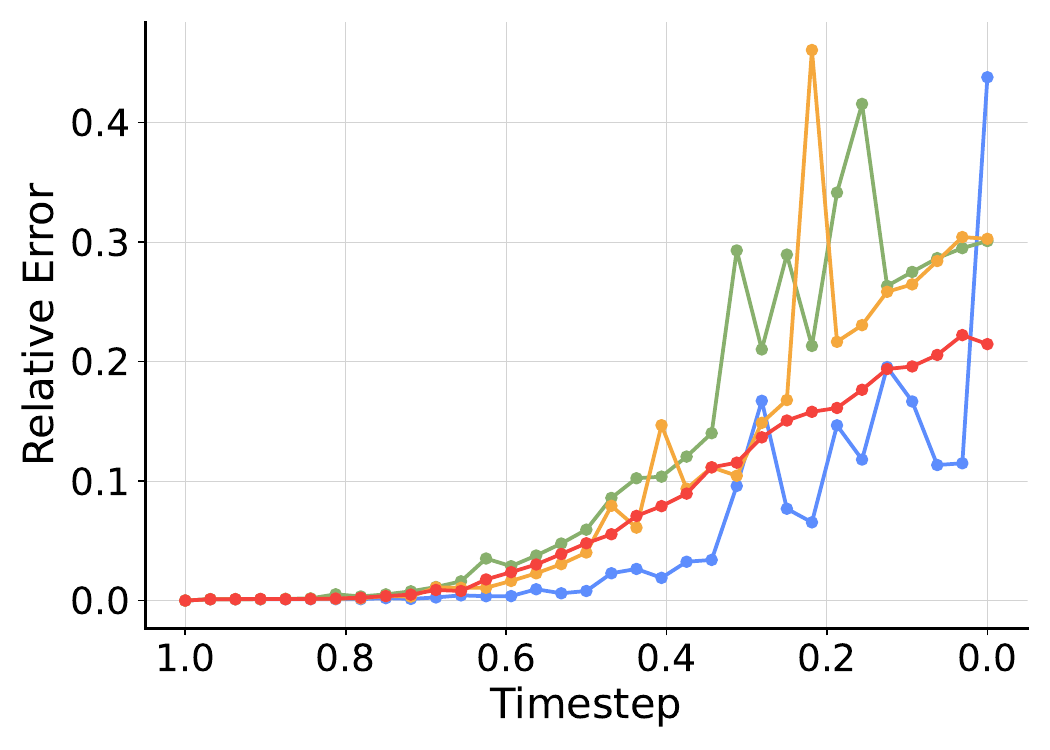}
    \end{minipage}
    \hfill
    \begin{minipage}[t]{0.47\linewidth}
        \vspace{0pt} %
        \centering
        \includegraphics[width=0.88\linewidth]{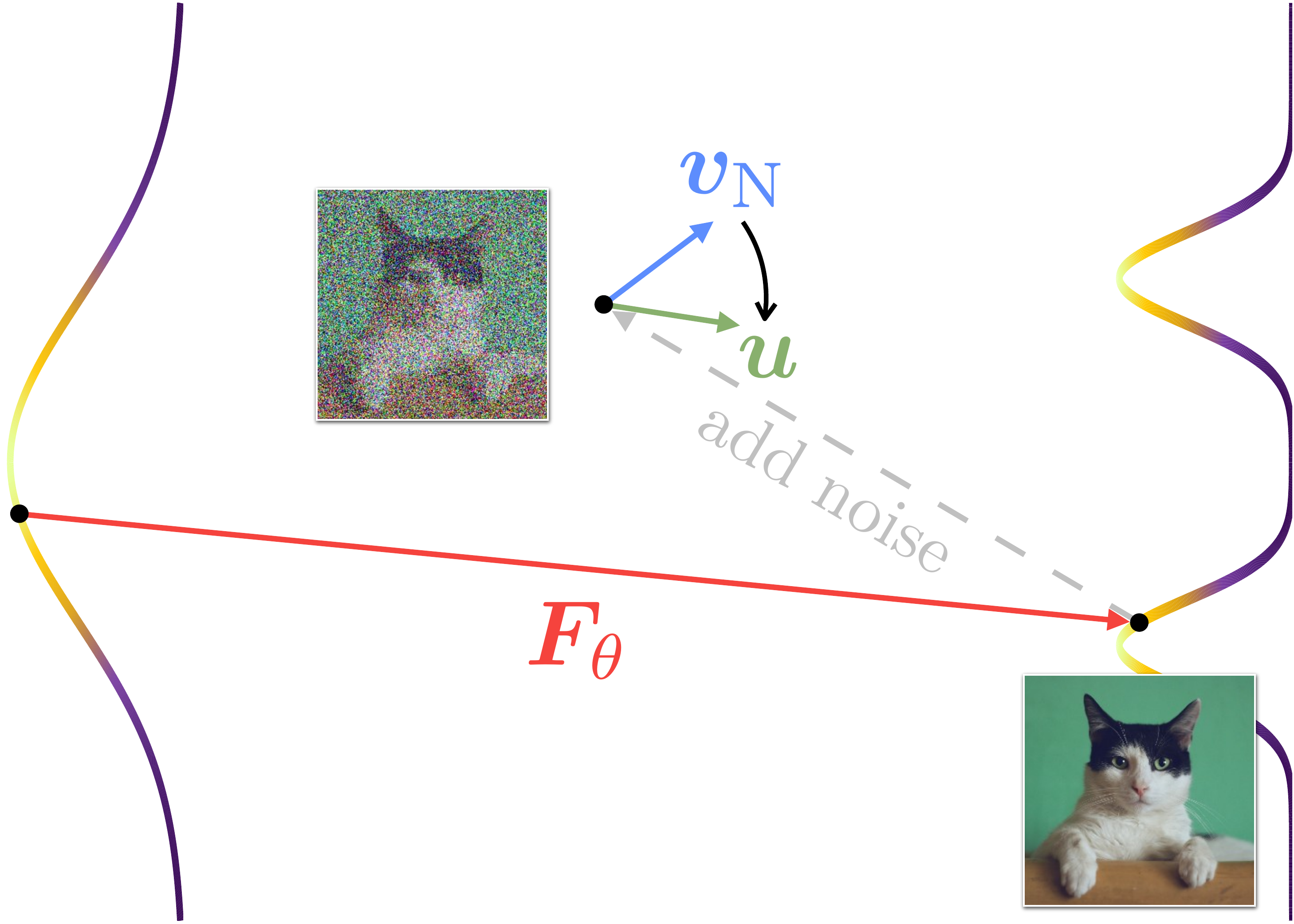}
    \end{minipage}
    \vspace{\abovecaptionskip}
    \begin{minipage}[t]{0.47\linewidth}
        \caption{Approximation errors accumulate as the prediction proceeds from noise to data.}
        \label{fig:err_acc}
    \end{minipage}
    \hfill
    \begin{minipage}[t]{0.47\linewidth}
        \caption{Correction objective in \cref{eq:c_grad} aligns the student's noising velocity $\vvN$ with $\vu$.}
        \label{fig:corr}
    \end{minipage}
    \vspace{-0.5cm}
\end{figure}

\subsection{Correct With Noising Flows}
\label{sec:correct}

The problem identified above is that the student is trained to predict the next state based on its current one, but it has no means to correct its own deviations and pull the trajectory back towards the teacher's true path. Drawing inspiration from \citet{song2021scorebased}, we seek to correct the marginal distributions of the student solutions, analogous to a predictor-corrector method for solving ODEs~\citep{allgower2012numerical}. Additionally, the correction objective cannot reintroduce the data-dependency we have worked to remove, meaning that we do not consider objectives like GANs~\citep{goodfellow2014generative} that rely on an external dataset.

Variational Score Distillation~\citep{wang2023prolificdreamer} was originally proposed as a training procedure to distill distributions from pre-trained diffusion models by minimizing the Integral KL divergence~\citep{luo2023diff}. We slightly adapt it to our setting, where we use $q\equiv q_0$ to denote the marginal distribution of clean samples generated by the model, $\int \vf_\theta(\vz, 1) d\pi$. Specifically, it has been shown that $q=p$ if and only if their IKL divergence is 0, which is defined as
\begin{align*}
    D_\text{IKL}(q \parallel p) \coloneq \int_0^1 \E_{\vx_r\sim q_r}\left[\log\frac{q_r(\vx_r)}{p_r(\vx_r)}\right] dr,\numberthis\label{eq:ikl}
\end{align*}
where $q_r$ and $p_r$ are the marginal interpolating distributions, following the same noising process constructed by $\vI$.

The optimization gradient of \cref{eq:ikl} \wrt $\theta$ is $\E_{r,\vx_r}\left[ \left(\nabla_\theta \vx_r \right)^\top \left(\nabla_{\vx_r} \log q_r(\vx_r) - \nabla_{\vx_r} \log p_r(\vx_r)  \right) \right]$. As the score functions are interchangeable with the marginal velocities~\citep{ma2024sit}, we can optimize with the following gradient instead for correcting the student's prediction:
\begin{align*}
    \nabla_\theta\, \E_{\vz,\vn,r} \biggl[ \vF_\theta(\vz,1)^\top \sg\Bigl( \Delta_{\vvN,\vu}(\vI_r(\vf_{\theta}(\vz,1), \vn), r) \Bigr) \biggr],\numberthis\label{eq:c_grad}
\end{align*}
where $\vn$ is sampled from the prior $\pi$ like $\vz$. We verify that \cref{eq:c_grad} is also formulated by only sampling from $\pi$, free from the risks of Teacher-Data Mismatch. Here, $\vvN$ denotes the marginal velocity of the noising flow constructed from the generated distribution $q$ with the interpolating function $\vI$, and $\Delta_{\vvN,\vu}$ is the difference between $\vvN$ and $\vu$. We illustrate the high-level understanding of this mechanism in \cref{fig:corr}.

Since $\vvN$ is unknown, we approximate it with another online network $\vg_\psi$\footnote{Further parameterizations on $\vg_\psi$ are permissible.}, full-parameter~\citep{luo2023diff,yin2024one,yin2024improved,zhou2024score,zhou2024adversarial,xu2025one} or LoRA~\citep{hu2022lora,wang2023prolificdreamer,nguyen2024swiftbrush}, with loss in \cref{eq:flow_loss}. More specifically, for a pair of samples $\vf_{\theta}(\vz,1)$ and $\vn$, the conditional noising velocity is $-\partial_r\vI_r(\vf_{\theta}(\vz,1), \vn)$, and we arrive at $\vvN$ by taking the expectation over $\vz\sim\pi$ and $\vn\sim\pi$:
\begin{align*}
    \E_{\vz,\vn,r}\left\|\, \vg_\psi(\vI_r(\vf_{\theta}(\vz,1), \vn), r) + \partial_r\vI_r(\vf_{\theta}(\vz,1), \vn) \,\right\|^2.\numberthis\label{eq:c_psi_loss}
\end{align*}

We highlight the similarity of the gradient forms between \cref{eq:p_grad} and \cref{eq:c_grad}. Consequently, we identify that the optimality of the student is also equivalent to the alignment between the model's \emph{noising velocity} and the underlying velocity. That is, \cref{eq:ikl} evaluates to 0 if and only if $\Delta_{\vvN,\vu}=\bm{0}$. Such a velocity alignment perspective offers a series of new understandings, which provide the essential reasoning behind the practical design choices discussed later in \cref{sec:design}. We further note that a comprehensive correction procedure should correct the full predicted trajectory of the student, rather than only the end sample considered in \cref{eq:ikl}. However, we do not find such a design helpful in the experiment settings we considered in \cref{sec:exp}.

\section{Experiments}
\label{sec:exp}

We empirically validate our proposed method on ImageNet~\citep{russakovsky2015imagenet} at 256${\times}$256 and 512${\times}$512 resolutions using FID-50K~\citep{heusel2017gans}, with implementation details provided in \cref{sec:implement}.

\begin{figure*}[t]
    \centering 
    \begin{minipage}[t]{0.32\linewidth}
        \centering
        \includegraphics[width=0.9\linewidth]{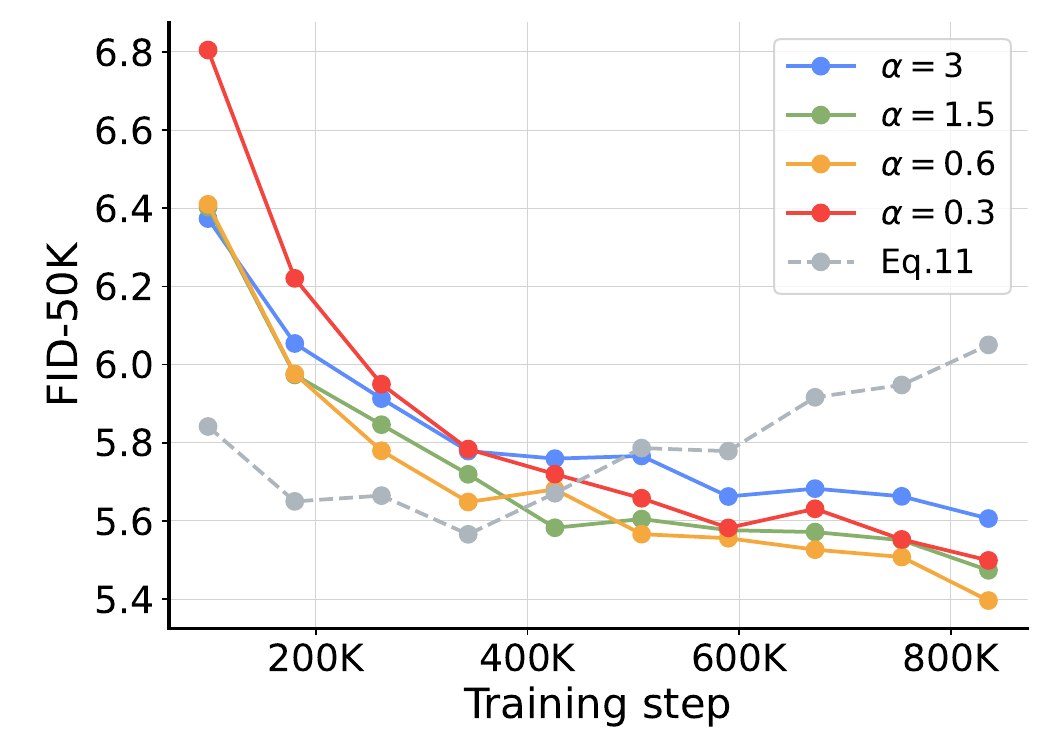} 
        \caption{Performance is robust across $\alpha$.}
        \label{fig:alpha}
    \end{minipage}
    \hfill
    \begin{minipage}[t]{0.32\linewidth}
        \centering
        \includegraphics[width=0.9\linewidth]{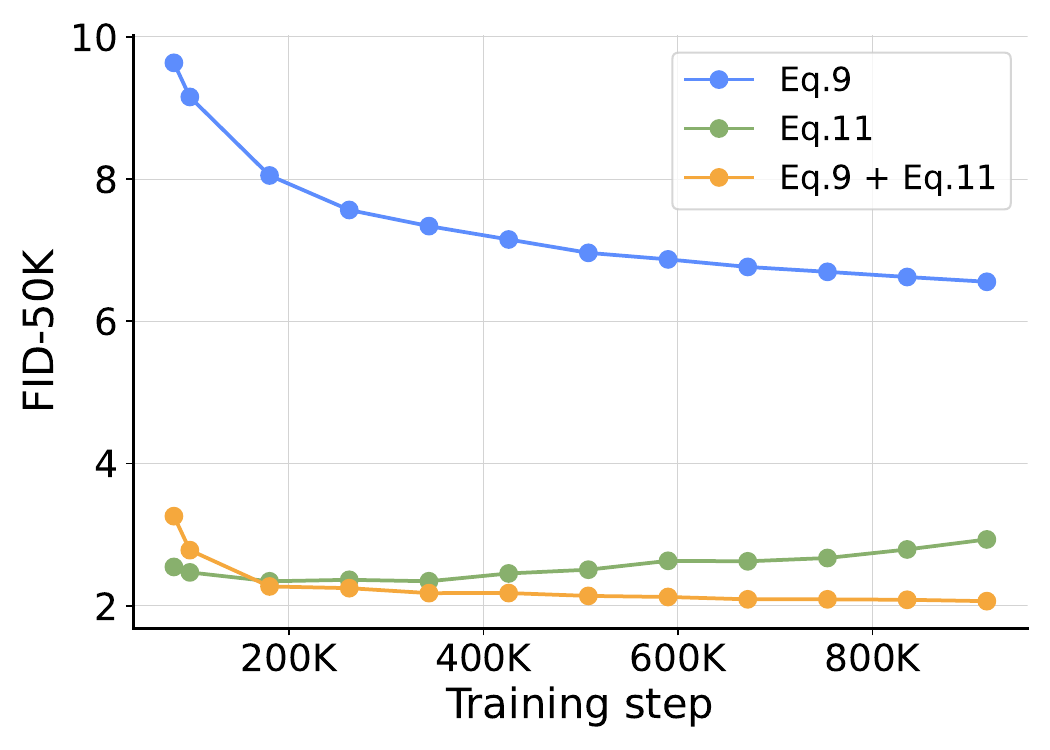}
        \caption{Synergy between \cref{eq:p_grad,eq:c_grad}.}
        \label{fig:synergy}
    \end{minipage}
    \hfill
    \begin{minipage}[t]{0.32\linewidth}
        \centering
        \includegraphics[width=0.9\linewidth]{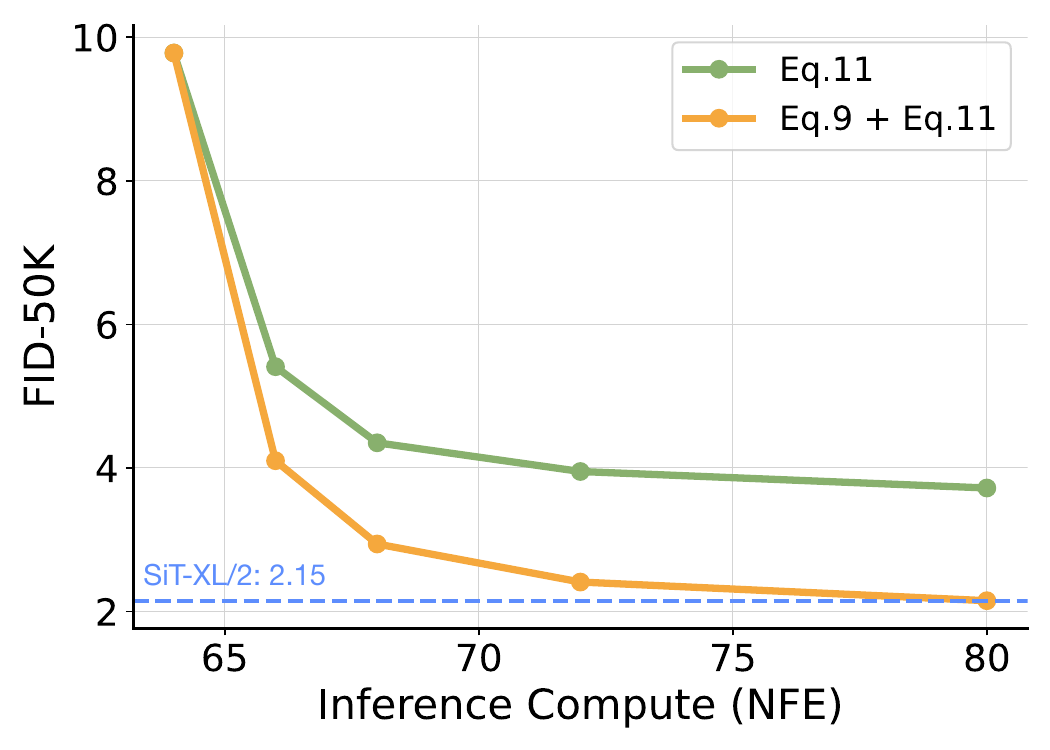}
        \caption{Inference-time scaling.}
        \label{fig:tts}
    \end{minipage}
    \vspace{-0.3cm}
\end{figure*}

\subsection{Design Decisions}
\label{sec:design}

We analyze each design choice through targeted qualitative and quantitative studies, presenting key findings in the main text and deferring full analyses to the \cref{sec:ext_discuss}. Unless specified otherwise, we adopt the DiT-B/2 architecture~\citep{peebles2023scalable}, use the pre-trained SiT-B/2~\citep{ma2024sit} as $\vu$ and student initialization, train the model for 400K iterations (roughly equivalent to 80 epochs with a batch size of 256) with uniformly sampling $\gamma\in[1,2]$, and evaluate with the best $\gamma=2$. We begin with designs specific to training with one of \cref{eq:p_grad,eq:c_grad}, followed by studies on how to properly combine the two.

\begin{table}[t]
    \centering
    \scalebox{0.98}{
    \begin{subtable}[t]{0.48\columnwidth}
        \tablestyle{6pt}{1.1}
        \begin{tabular}[t]{lS[table-format=1.2, mode=text]}
            \toprule
            $r$ sampling; \cref{eq:c_grad} & \multicolumn{1}{c}{FID $\downarrow$} \\
            \midrule
            LogitNormal(-0.4, 1.6) & 6.24 \\
            LogitNormal(0.0, 1.6) & 5.95 \\
            LogitNormal(0.4, 1.6) & 5.78 \\
            LogitNormal(0.8, 1.6) & 5.63 \\
            LogitNormal(1.2, 1.6) & 5.78 \\
            \bottomrule
        \end{tabular}
        \caption{\textbf{$r$ sampling.} More emphasis on higher noise levels leads to better results when aligning $\vvN$ and $\vu$.}
        \label{tab:r_sample}
    \end{subtable}
    }
    \hfill
    \scalebox{0.98}{
    \begin{subtable}[t]{0.48\columnwidth}
        \centering
        \tablestyle{6pt}{1.1}
        \begin{tabular}[t]{lS[table-format=2.2, mode=text]} 
            \toprule
            $r$ range; \cref{eq:c_grad} & \multicolumn{1}{c}{FID $\downarrow$} \\
            \midrule
            {[0, 0.6]} & 91.82 \\
            {[0, 0.7]} & 24.62 \\
            {[0, 0.8]} & 9.00 \\
            {[0, 0.9]} & 6.64 \\
            {[0, 1.0]} & 6.02 \\
            \bottomrule
        \end{tabular}
        \caption{\textbf{$r$ range.} With a uniform distribution over $r$, dropping higher noise levels leads to worse results.}
        \label{tab:r_range}
    \end{subtable}
    }

    \vspace{1ex}

    \scalebox{0.98}{
    \begin{subtable}[t]{0.48\columnwidth}
        \centering
        \tablestyle{6pt}{1.1}
        \begin{tabular}[t]{lS[table-format=1.2, mode=text]} 
            \toprule
            interval; \cref{eq:c_grad} & \multicolumn{1}{c}{FID $\downarrow$} \\
            \midrule
            {[0, 0.5]} & 7.09 \\
            {[0, 0.6]} & 5.72 \\
            {[0, 0.7]} & 5.63 \\
            {[0, 0.8]} & 6.44 \\
            {[0, 0.9]} & 7.41 \\
            {[0, 1.0]} & 8.65 \\
            \bottomrule
        \end{tabular}
        \caption{\textbf{Guidance interval.} Compared to teacher sampling, a more aggressive guidance interval is better.}
        \label{tab:guid_intvl}
    \end{subtable}
    }
    \hfill
    \scalebox{0.98}{
    \begin{subtable}[t]{0.48\columnwidth}
        \centering
        \tablestyle{6pt}{1.1}
        \begin{tabular}[t]{lS[table-format=1.1, mode=text]S[table-format=2.2, mode=text]} 
            \toprule
            objective & \multicolumn{1}{c}{$k$} & \multicolumn{1}{c}{FID $\downarrow$} \\
            \midrule
            \multirow{3}{*}{\cref{eq:p_grad}} & 0.0 & 11.91 \\
            & 0.5 & 11.71 \\
            & 1.0 & 12.40 \\
            \arrayrulecolor{black!30}\specialrule{\lightrulewidth}{0pt}{0pt}\arrayrulecolor{black}
            \multirow{3}{*}{\cref{eq:p_grad,eq:c_grad}} & 0.0 & 43.53 \\
            & 0.5 & 10.58 \\
            & 1.0 & 5.58 \\
            \bottomrule
        \end{tabular}
        \caption{\textbf{Gradient weighting.} With both objectives, stronger decay on $\Delta_{\vvG,\vu}$ leads to better training.}
        \label{tab:k_impact}
    \end{subtable}
    }
    \caption{Empirical investigations of various design decisions.}
    \label{tab:design}
    \vspace{-0.5cm}
\end{table}

\paragraph{Sampling of $r$ in \cref{eq:c_grad}.}
Traditionally, within the diffusion framework and from the divergence minimization perspective, the sampling of $r$ in \cref{eq:c_grad} often follows a uniform distribution over the discretized steps designed for generation~\citep{poole2022dreamfusion,wang2023prolificdreamer,yin2024one}. Here, we provide a new perspective on the \cref{eq:c_grad}, which drives our proposal on the sampling distribution of $r$. Recall that the optimality of our correction objective is $\Delta_{\vvN,\vu}=\bm{0}$, the alignment between the noising velocity of the model's generated distribution and the underlying velocity. The velocity fields induce a pair of continuity equations $\partial_t p_t(\vx) = - \nabla_\vx \cdot (p_t(\vx) \vu(\vx, t))$ and $\partial_t q_t(\vx) = - \nabla_\vx \cdot (q_t(\vx) \vvN(\vx, t))$, which dictate the evolution of $p$ and $q$. We note that $p_1=q_1=\pi$ by construction, and the gap between $p_0$ and $q_0$ can be understood as the time-integrated accumulation of the differences in their corresponding probability fluxes. This understanding suggests we place a greater emphasis on higher noise levels, and we empirically validate this intuition in \cref{tab:r_sample,tab:r_range}.

\paragraph{Handling guidance in \cref{eq:c_grad}.}
The usual treatment for including guidance in \cref{eq:c_grad} is the same as in \cref{eq:p_grad}: replacing it with the guided velocity $\vu_\gamma$. However, we highlight that there is a subtle but major difference between $\vvG$ and $\vvN$. In a traditional flow model training with \cref{eq:flow_loss}, we essentially train the model to learn a dataset's noising velocity, and use it as the generating velocity during sampling, \ie, the two velocities are identical as they describe the same process. However, this equivalence no longer holds in the presence of techniques like CFG~\citep{karras2024guiding,zheng2024characteristic}, and the distinction is especially prominent at high noise levels. %
We resort to dropping the guidance application at high noise levels, in a similar fashion to the guidance interval~\citep{kynkaanniemi2024applying} used for flow sampling. Furthermore, as demonstrated in \cref{tab:guid_intvl}, we stress that their empirical behaviors are different, and one typically needs to limit the interval significantly more aggressively (the pre-trained SiT-B/2 does not benefit from guidance interval with $\gamma=2$) in our correction objective.

\paragraph{Adaptive gradient balancing between \cref{eq:p_grad,eq:c_grad}.}
We now discuss how to fuse the training signals from \cref{eq:p_grad,eq:c_grad}. First, in a similar manner to prior works~\citep{frans2024one,geng2025mean}, we decide to split the training batch between the prediction and correction objectives ($75\%$ and $25\%$, respectively), since correction is slightly more expensive compute-wise. Then, we adopt an adaptive gradient balancing strategy~\citep{esser2021taming}, where the correction gradients are scaled by some dynamic weight $\lambda$ before concatenating with the prediction gradients. With the form similarity between \cref{eq:p_grad} and \cref{eq:c_grad}, and the observation that both optimizations lack aleatoric uncertainty~\citep{kendall2017uncertainties}\footnote{$\Delta_{\vvG,\vu}$ and $\Delta_{\vvN,\vu}$ represent the quality of alignments, unlike \cref{eq:flow_loss}.}, we design $\lambda=\alpha\frac{\E\|\Delta_{\vvG,\vu}\|}{\E\|\Delta_{\vvN,\vu}\|+\epsilon}$, where the expectation is taken over the mini-batch and $\epsilon=10^{-6}$ is used for numerical stability. We show that the model performance is robust across a wide range of $\alpha$ in \cref{fig:alpha}.%

\paragraph{Gradient norm manipulations in \cref{eq:p_grad}.}
We note that we can change the magnitude of $\Delta_{\vvG,\vu}$ freely without changing the optimal solution, which can also be understood as changing the loss metrics~\cite{geng2024consistency}. Concretely, we can scale $\Delta_{\vvG,\vu}$ with per-sample positive weights. Prior works~\citep{song2024improved,geng2024consistency,geng2025mean} mostly explored scaling with $1/(\|\Delta_{\vvG,\vu}\|^2+\varepsilon)^k$, where $\varepsilon=10^{-4}$ is used for numerical stability and we vary the power term $k$. Since the actual weighting applied depends on the original norm of $\Delta_{\vvG,\vu}$, which changes with the dimension of data $d$, we first divide it by $\sqrt{d}$ so that it is dimension invariant before calculating the weight. We note that such a weighting design corresponds to applying a power-law decay on $\Delta_{\vvG,\vu}$, and a larger $k$ indicates a stronger decay, effectively dampening the gradient contributions. In \cref{tab:k_impact}, we find that the model prefers weightings with a stronger decay when training with both \cref{eq:p_grad,eq:c_grad}. We hypothesize that this is because their signals may not always agree with each other in practice, and by applying a dampener on $\Delta_{\vvG,\vu}$, we mitigate the conflict between the two objectives and promote a more harmonious joint optimization. %

\subsection{Main Results}
\label{sec:result}

\begin{table*}[t]
    \centering
    \caption{
        \textbf{Class-conditional generation on ImageNet 256$\bm{\times}$256 and 512$\bm{\times}$512.} \textsuperscript{*} indicates the use of AutoGuidance~\citep{karras2024guiding}. Methods marked with \textsuperscript{$\dagger$} are initialized from pretrained models. Crucially, unlike other listed distillation baselines, ours is constructed to be entirely data-free.
    }
    \label{tab:main}
    \scriptsize
    \setlength{\tabcolsep}{8pt}

    \begin{subtable}[t]{0.48\linewidth}
    \centering
    \begin{NiceTabular}[t]{
        l
        c
        c
        c
        S[table-format=2.2, detect-weight=true, mode=text]
    }
        \multicolumn{5}{l}{\textbf{Class-Conditional ImageNet 256$\bm{\times}$256}} \\
        \toprule
        \textbf{Method} &  \textbf{Epochs} & \textbf{\#Params} & \textbf{NFE} $\downarrow$ & \multicolumn{1}{c}{\textbf{FID} $\downarrow$} \\
        \midrule
        \multicolumn{5}{l}{\textit{\textbf{Teacher Diffusion / Flow Models}}} \\
        \arrayrulecolor{black!30}\midrule
        SiT-XL/2~\citep{ma2024sit} & 1400 & 675M & 250${\times}$2 & 2.06 \\
        SiT-XL/2+REPA~\citep{yu2024representation} & 800 & 675M & 434 & 1.37 \\
        \arrayrulecolor{black}\midrule

        \multicolumn{5}{l}{\textit{\textbf{Fast Flow from scratch}}} \\
        \arrayrulecolor{black!30}\midrule
        \multirow{2}{*}{Shortcut-XL/2~\citep{frans2024one}} & \multirow{2}{*}{250} &  \multirow{2}{*}{675M} & 1 & 10.60 \\
         &  &  & 128 & 3.80 \\
        \arrayrulecolor{black!10}\midrule
        \multirow{2}{*}{IMM-XL/2~\citep{zhou2025inductive}} & \multirow{2}{*}{3840}  & \multirow{2}{*}{675M} & 1${\times}$2 & 7.77 \\
         &  &  & 8${\times}$2 & 1.99 \\
        \arrayrulecolor{black!10}\midrule
        \multirow{2}{*}{STEI~\citep{liu2025learning}} & \multirow{2}{*}{1420\textsuperscript{$\dagger$}} & \multirow{2}{*}{675M} & 1 & 7.12 \\
         &  &  & 8 & 1.96 \\
        \arrayrulecolor{black!10}\midrule
        \multirow{2}{*}{MeanFlow-XL/2~\citep{geng2025mean}}  &  240 & \multirow{2}{*}{676M} & 1 & 3.43 \\
         & 1000 &  & 2 & 2.20 \\
        \arrayrulecolor{black!10}\midrule
        \multirow{2}{*}{DMF-XL/2~\citep{lee2025decoupled}}  & \multirow{2}{*}{880\textsuperscript{$\dagger$}} & \multirow{2}{*}{675M} & 1 & 2.16 \\
         &  &  & 4 & 1.51 \\
        \arrayrulecolor{black}\midrule

        \multicolumn{5}{l}{\textit{\textbf{Fast Flow by distillation}}} \\
        \arrayrulecolor{black!30}\midrule
        \multicolumn{5}{l}{\cellcolor{gray!10}\textit{Teacher: SiT-XL/2 (FID $=2.06$)}} \\
        \arrayrulecolor{black!10}\midrule
        SDEI~\citep{liu2025learning} & 20 &  675M & 8 & 2.46 \\
        \arrayrulecolor{black!10}\midrule
        FACM~\citep{peng2025flow} & --  & 675M & 2 & 2.07 \\
        \arrayrulecolor{black!10}\midrule
        \multirow{2}{*}{\textbf{FreeFlow-XL/2}} & 20  &  \multirow{2}{*}{678M} & 1 &  2.24 \\
         & 300 &  & 1 & \bfseries  1.69 \\
        \arrayrulecolor{black!30}\midrule
        \multicolumn{5}{l}{\cellcolor{gray!10}\textit{Teacher: SiT-XL/2+REPA (FID $=1.37$)}} \\
        \arrayrulecolor{black!10}\midrule
        FACM~\citep{peng2025flow} & -- & 675M & 2 & 1.52 \\
        \arrayrulecolor{black!10}\midrule
        \multirow{2}{*}{$\pi$-Flow~\citep{chen2025pi}} & \multirow{2}{*}{448} & \multirow{2}{*}{675M} & 1 & 2.85 \\
         &  &  & 2 & 1.97 \\
        \arrayrulecolor{black!10}\midrule
        \multirow{2}{*}{\textbf{FreeFlow-XL/2}} & 20  &  \multirow{2}{*}{678M} & 1 &  1.84 \\
         &  300  &  & 1 & \bfseries  1.45 \\
        \arrayrulecolor{black}\bottomrule
    \end{NiceTabular}
    \end{subtable}
    \hfill
    \begin{subtable}[t]{0.48\linewidth}
    \centering
        \begin{NiceTabular}[t]{
        l
        c
        c
        c
        S[table-format=2.2, detect-weight=true, mode=text]
    }
        \multicolumn{5}{l}{\textbf{Class-Conditional ImageNet 512$\bm{\times}$512}} \\
        \toprule
        \textbf{Method} & \textbf{Epochs} & \textbf{\#Params} & \textbf{NFE} $\downarrow$ & \multicolumn{1}{c}{\textbf{FID} $\downarrow$} \\
        \midrule
        \multicolumn{5}{l}{\textit{\textbf{Teacher Diffusion / Flow Models}}} \\
        \arrayrulecolor{black!30}\midrule
        SiT-XL/2~\citep{ma2024sit} & 600 & 675M & 250${\times}$2 & 2.62 \\
        SiT-XL/2+REPA~\citep{yu2024representation} &  400 &  675M & 460 & 1.37  \\
        \arrayrulecolor{black!10}\midrule
        EDM2-S\textsuperscript{*}~\citep{karras2024guiding} & 1678 & 280M & 63${\times}$2 & 1.34 \\
        \arrayrulecolor{black!10}\midrule
        EDM2-XXL~\citep{karras2024analyzing} &  \multirow{2}{*}{734}  & \multirow{2}{*}{1.5B} & 82 & 1.40 \\
        EDM2-XXL\textsuperscript{*}~\citep{karras2024guiding}   &   &  & 63${\times}$2 & 1.25 \\
        \arrayrulecolor{black}\midrule

        \multicolumn{5}{l}{\textit{\textbf{Fast Flow from scratch}}} \\
        \arrayrulecolor{black!30}\midrule
        \multirow{2}{*}{sCT-XXL~\citep{lu2024simplifying}} & \multirow{2}{*}{761\textsuperscript{$\dagger$}}  & \multirow{2}{*}{1.5B} & 1 & 4.29 \\
         & & & 2 & 3.76 \\
        \arrayrulecolor{black!10}\midrule
        \multirow{2}{*}{DMF-XL/2~\citep{lee2025decoupled}} & \multirow{2}{*}{540\textsuperscript{$\dagger$}}  & \multirow{2}{*}{675M} & 1 & 2.12 \\
         &  &  & 4 & 1.68 \\
        \arrayrulecolor{black}\midrule

        \multicolumn{5}{l}{\textit{\textbf{Fast Flow by distillation}}} \\
        \arrayrulecolor{black!30}\midrule

        \multicolumn{5}{l}{\cellcolor{gray!10}\textit{Teacher: EDM2-S\textsuperscript{*} (FID $=1.34$)}} \\
        \arrayrulecolor{black!10}\midrule
        \multirow{2}{*}{AYF-S~\citep{sabour2025align}} &  \multirow{2}{*}{80}  &  \multirow{2}{*}{280M} & 1 & 3.32 \\
         & & & 4 & 1.70 \\

        \arrayrulecolor{black!30}\midrule
        \multicolumn{5}{l}{\cellcolor{gray!10}\textit{Teacher: EDM2-XXL (FID $=1.40$)}} \\
        \arrayrulecolor{black!10}\midrule
        \multirow{2}{*}{sCD-XXL~\citep{lu2024simplifying}} &   \multirow{2}{*}{320} &  \multirow{5}{*}{1.5B} & 1 & 2.28 \\
         & & & 2 & 1.88 \\
        \arrayrulecolor{black!10}\cmidrule{1-2}\cmidrule{4-5}
        \multirow{2}{*}{sCD-XXL+VSD~\citep{lu2024simplifying}} & \multirow{2}{*}{32} &  & 1 & 2.16 \\
         & & & 2 & 1.89 \\
        \arrayrulecolor{black!30}\midrule

        \multicolumn{5}{l}{\cellcolor{gray!10}\textit{Teacher: SiT-XL/2 (FID $=2.62$)}} \\
        \arrayrulecolor{black!10}\midrule
        \multirow{2}{*}{\textbf{FreeFlow-XL/2}} & 20 & \multirow{2}{*}{678M} & \multirow{2}{*}{1} & 3.01 \\
         & 200 & & & \bfseries 2.25 \\
        \arrayrulecolor{black!30}\midrule
        \multicolumn{5}{l}{\cellcolor{gray!10}\textit{Teacher: SiT-XL/2+REPA (FID $=1.37$)}} \\
        \arrayrulecolor{black!10}\midrule
        \multirow{2}{*}{\textbf{FreeFlow-XL/2}} &  20  & \multirow{2}{*}{678M} & \multirow{2}{*}{1} & 2.11 \\
         & 200 & & & \bfseries 1.49 \\
        \arrayrulecolor{black}\bottomrule
    \end{NiceTabular}
    \end{subtable}
    \vspace{-0.3cm}
\end{table*}

\paragraph{Comparisons with prior work.}
In \cref{tab:main}, we benchmark our approach against existing proposals for learning fast flows on class-conditional ImageNet~\citep{russakovsky2015imagenet} generation at both 256${\times}$256 and 512${\times}$512 resolutions. We highlight three key findings from our main results. (1) \textbf{Our method achieves state-of-the-art performance by a significant margin.} Distilling from SiT-XL/2+REPA~\citep{yu2024representation}, our method reaches an impressive FID of \textbf{1.45} on 256${\times}$256 and \textbf{1.49} on 512${\times}$512, greatly outperforming prior proposals. (2) \textbf{Competitive performance is realized very early in training.} Our model surpasses the final performance of many strong baselines after only 100K iterations ($\approx$20 epochs), demonstrating exceptional training efficiency. (3) \textbf{Our student faithfully reproduces the teacher's full capabilities with 1-NFE.} Our distilled model consistently stays within 10\% of the teacher's original performance with only a single step, even when the teacher employs advanced training techniques like REPA~\citep{yu2024representation}, and sampling techniques like guidance intervals~\citep{kynkaanniemi2024applying}. This is a critical advantage over training fast flows from scratch: it allows us to seamlessly inherit the benefits of complex teacher training recipes, which can be complicated to adopt~\citep{lee2025decoupled}, simply by distilling the final product. Crucially, we emphasize that our results are achieved using \emph{only} the pre-trained teacher model, without requiring access to a single sample from an external dataset, real or synthetic, thus validating the effectiveness of the data-free paradigm.

\paragraph{Synergy between prediction and correction.}
While theory suggests that either learning the flow trajectories (\cref{eq:p_grad}) or matching the marginal distributions (\cref{eq:c_grad}) could suffice for generation, we find that neither is robust in isolation. The prediction objective, when used alone, falls victim to the error accumulation identified in \cref{sec:predict}, plateauing at suboptimal fidelity ({\color[HTML]{5D8DFD} blue line} in \cref{fig:synergy}). Conversely, training only with the correction objective (\cref{eq:c_grad}) leads to gradual mode collapse and performance degradation ({\color[HTML]{88B06D} green line} in \cref{fig:synergy}; also {\color[HTML]{ADB5BD} gray baseline} in \cref{fig:alpha}). \cref{fig:synergy} illustrates the powerful synergy realized by our framework on a SiT-XL/2 teacher. By combining both signals at their optimal settings, we achieve performance strictly superior to either independent component. The prediction signals construct the generative path, while the correction signals act as a stabilizer to rectify compounding errors, ensuring consistent improvement throughout training.

\paragraph{Inference-time scaling.}
The recently proposed inference-time scaling framework~\citep{ma2025inference, singhal2025general} offers a promising avenue to trade additional compute for generation quality. However, existing search strategies typically require the full integration of $\vphi_\vu$ for every candidate, making the search process prohibitively expensive. We propose a more efficient alternative: by distilling the teacher into a flow map, we create a fast proxy that retains the teacher's mapping from noise to data. This allows us to conduct the expensive search using the cheap, one-step student, transferring only the optimal noise to the teacher for final generation. We investigate a Best-of-$N$ search with an oracle verifier~\citep{ma2025inference}, employing our student (trained for only 20 epochs) to guide the fixed SiT-XL/2 teacher. As shown in \cref{fig:tts}, this approach drastically improves the teacher's sampling quality. Crucially, the results highlight the benefit of our prediction objective (\cref{eq:p_grad}): while the correction-only model (\cref{eq:c_grad}) yields improvements, it is notably less efficient at identifying transferable noise candidates due to a lack of guaranteed trajectory alignment. Our combined objective, by enforcing strict consistency with $\vphi_\vu$, enables a much more effective search. With a total budget of only 80 NFEs, our method outperforms the teacher's standard classifier-free guidance sampling at 128 NFEs. This result demonstrates a powerful practical trade-off: by shifting a fraction of the inference burden to a short distillation phase, we enable the compute-efficient deployment of large-scale diffusion models.

\section{Conclusion}
\label{sec:conclusion}

In this work, we challenge the conventional reliance on external datasets for flow map distillation. We identify a fundamental vulnerability in this practice, the Teacher-Data Mismatch, and argue that a static dataset is an inherently unreliable proxy for a teacher's full generative capabilities. This data-dependency is not only risky but also unnecessary. Our principled, data-free alternative, formulated as a predictor-corrector framework, resolves this mismatch by construction as it samples only from the prior. Our strong empirical results, which establish a new state-of-the-art, confirm the practical viability and strength of this data-free paradigm. We believe this work provides a more robust foundation for accelerating generative models and hope it motivates a broader exploration of flow map distillation without data.

\section*{Acknowledgments}
We are grateful to Kaiming He for valuable discussions and feedback on the manuscript. This work was partly supported by the Google TPU Research Cloud (TRC) program and the Google Cloud Research Credits program (GCP19980904). ST and TJ acknowledge support from the Machine Learning for Pharmaceutical Discovery and Synthesis (MLPDS) consortium, the DTRA Discovery of Medical Countermeasures Against New and Emerging (DOMANE) threats program, the NSF Expeditions grant (award 1918839) Understanding the World Through Code. SX acknowledges support from the MSIT IITP grant (RS-2024-00457882) and the NSF award IIS-2443404.

{
    \small
    \bibliographystyle{ieeenat_fullname}
    \bibliography{main}

@String(ICLR = {Int. Conf. Learn. Represent.})

@String(ICLR  = {ICLR})

@inproceedings{sohl2015deep,
  title={Deep unsupervised learning using nonequilibrium thermodynamics},
  author={Sohl-Dickstein, Jascha and Weiss, Eric and Maheswaranathan, Niru and Ganguli, Surya},
  booktitle={International conference on machine learning},
  pages={2256--2265},
  year={2015},
  organization={PMLR}
}

@inproceedings{song2021scorebased,
  title={Score-Based Generative Modeling through Stochastic Differential Equations},
  author={Yang Song and Jascha Sohl-Dickstein and Diederik P Kingma and Abhishek Kumar and Stefano Ermon and Ben Poole},
  booktitle={International Conference on Learning Representations},
  year={2021},
  url={https://openreview.net/forum?id=PxTIG12RRHS}
}

@article{ho2020denoising,
  title={Denoising diffusion probabilistic models},
  author={Ho, Jonathan and Jain, Ajay and Abbeel, Pieter},
  journal={Advances in Neural Information Processing Systems},
  volume={33},
  pages={6840--6851},
  year={2020}
}

@article{karras2022elucidating,
  title={Elucidating the design space of diffusion-based generative models},
  author={Karras, Tero and Aittala, Miika and Aila, Timo and Laine, Samuli},
  journal={Advances in neural information processing systems},
  volume={35},
  pages={26565--26577},
  year={2022}
}

@article{peluchetti2023non,
  title={Non-denoising forward-time diffusions},
  author={Peluchetti, Stefano},
  journal={arXiv preprint arXiv:2312.14589},
  year={2023}
}

@inproceedings{liu2023flow,
  title={Flow Straight and Fast: Learning to Generate and Transfer Data with Rectified Flow},
  author={Xingchao Liu and Chengyue Gong and qiang liu},
  booktitle={The Eleventh International Conference on Learning Representations },
  year={2023},
  url={https://openreview.net/forum?id=XVjTT1nw5z}
}

@inproceedings{albergo2023building,
  title={Building Normalizing Flows with Stochastic Interpolants},
  author={Michael Samuel Albergo and Eric Vanden-Eijnden},
  booktitle={The Eleventh International Conference on Learning Representations },
  year={2023},
  url={https://openreview.net/forum?id=li7qeBbCR1t}
}

@inproceedings{lipman2023flow,
  title={Flow Matching for Generative Modeling},
  author={Yaron Lipman and Ricky T. Q. Chen and Heli Ben-Hamu and Maximilian Nickel and Matthew Le},
  booktitle={The Eleventh International Conference on Learning Representations },
  year={2023},
  url={https://openreview.net/forum?id=PqvMRDCJT9t}
}

@inproceedings{heitz2023iterative,
  title={Iterative $\alpha$-(de) blending: A minimalist deterministic diffusion model},
  author={Heitz, Eric and Belcour, Laurent and Chambon, Thomas},
  booktitle={ACM SIGGRAPH 2023 Conference Proceedings},
  pages={1--8},
  year={2023}
}

@article{hyvarinen2005estimation,
  title={Estimation of non-normalized statistical models by score matching.},
  author={Hyv{\"a}rinen, Aapo and Dayan, Peter},
  journal={Journal of Machine Learning Research},
  volume={6},
  number={4},
  year={2005}
}

@article{ma2024sit,
  title={Sit: Exploring flow and diffusion-based generative models with scalable interpolant transformers},
  author={Ma, Nanye and Goldstein, Mark and Albergo, Michael S and Boffi, Nicholas M and Vanden-Eijnden, Eric and Xie, Saining},
  journal={arXiv preprint arXiv:2401.08740},
  year={2024}
}

@article{song2020denoising,
  title={Denoising diffusion implicit models},
  author={Song, Jiaming and Meng, Chenlin and Ermon, Stefano},
  journal={arXiv preprint arXiv:2010.02502},
  year={2020}
}

@article{xu2022poisson,
  title={Poisson flow generative models},
  author={Xu, Yilun and Liu, Ziming and Tegmark, Max and Jaakkola, Tommi},
  journal={Advances in Neural Information Processing Systems},
  volume={35},
  pages={16782--16795},
  year={2022}
}

@inproceedings{xu2023pfgm++,
  title={Pfgm++: Unlocking the potential of physics-inspired generative models},
  author={Xu, Yilun and Liu, Ziming and Tian, Yonglong and Tong, Shangyuan and Tegmark, Max and Jaakkola, Tommi},
  booktitle={International Conference on Machine Learning},
  pages={38566--38591},
  year={2023},
  organization={PMLR}
}

@article{lu2022dpm,
  title={Dpm-solver: A fast ode solver for diffusion probabilistic model sampling in around 10 steps},
  author={Lu, Cheng and Zhou, Yuhao and Bao, Fan and Chen, Jianfei and Li, Chongxuan and Zhu, Jun},
  journal={Advances in Neural Information Processing Systems},
  volume={35},
  pages={5775--5787},
  year={2022}
}

@inproceedings{xu2023stable,
  title={Stable Target Field for Reduced Variance Score Estimation in Diffusion Models},
  author={Yilun Xu and Shangyuan Tong and Tommi S. Jaakkola},
  booktitle={The Eleventh International Conference on Learning Representations },
  year={2023},
  url={https://openreview.net/forum?id=WmIwYTd0YTF}
}

@inproceedings{kim2024consistency,
  title={Consistency Trajectory Models: Learning Probability Flow {ODE} Trajectory of Diffusion},
  author={Dongjun Kim and Chieh-Hsin Lai and Wei-Hsiang Liao and Naoki Murata and Yuhta Takida and Toshimitsu Uesaka and Yutong He and Yuki Mitsufuji and Stefano Ermon},
  booktitle={The Twelfth International Conference on Learning Representations},
  year={2024},
  url={https://openreview.net/forum?id=ymjI8feDTD}
}

@article{frans2024one,
  title={One Step Diffusion via Shortcut Models},
  author={Frans, Kevin and Hafner, Danijar and Levine, Sergey and Abbeel, Pieter},
  journal={arXiv preprint arXiv:2410.12557},
  year={2024}
}

@article{vincent2011connection,
  title={A connection between score matching and denoising autoencoders},
  author={Vincent, Pascal},
  journal={Neural computation},
  volume={23},
  number={7},
  pages={1661--1674},
  year={2011},
  publisher={MIT Press}
}

@article{song2019generative,
  title={Generative modeling by estimating gradients of the data distribution},
  author={Song, Yang and Ermon, Stefano},
  journal={Advances in neural information processing systems},
  volume={32},
  year={2019}
}

@inproceedings{xiao2022tackling,
  title={Tackling the Generative Learning Trilemma with Denoising Diffusion {GAN}s},
  author={Zhisheng Xiao and Karsten Kreis and Arash Vahdat},
  booktitle={International Conference on Learning Representations},
  year={2022},
  url={https://openreview.net/forum?id=JprM0p-q0Co}
}

@misc{coddington1956theory,
  title={Theory of ordinary differential equations},
  author={Coddington, Earl A and Levinson, Norman and Teichmann, T},
  year={1956},
  publisher={American Institute of Physics}
}

@article{boffi2025flow,
  title={Flow map matching with stochastic interpolants: A mathematical framework for consistency models},
  author={Boffi, Nicholas Matthew and Albergo, Michael Samuel and Vanden-Eijnden, Eric},
  journal={Transactions on Machine Learning Research},
  year={2025}
}

@inproceedings{salimans2022progressive,
  title={Progressive Distillation for Fast Sampling of Diffusion Models},
  author={Tim Salimans and Jonathan Ho},
  booktitle={International Conference on Learning Representations},
  year={2022},
  url={https://openreview.net/forum?id=TIdIXIpzhoI}
}

@inproceedings{song2023consistency,
  title={Consistency models},
  author={Song, Yang and Dhariwal, Prafulla and Chen, Mark and Sutskever, Ilya},
  booktitle={Proceedings of the 40th International Conference on Machine Learning},
  pages={32211--32252},
  year={2023}
}

@inproceedings{song2024improved,
  title={Improved Techniques for Training Consistency Models},
  author={Yang Song and Prafulla Dhariwal},
  booktitle={The Twelfth International Conference on Learning Representations},
  year={2024},
  url={https://openreview.net/forum?id=WNzy9bRDvG}
}

@article{lu2024simplifying,
  title={Simplifying, stabilizing and scaling continuous-time consistency models},
  author={Lu, Cheng and Song, Yang},
  journal={arXiv preprint arXiv:2410.11081},
  year={2024}
}

@article{luhman2021knowledge,
  title={Knowledge distillation in iterative generative models for improved sampling speed},
  author={Luhman, Eric and Luhman, Troy},
  journal={arXiv preprint arXiv:2101.02388},
  year={2021}
}

@inproceedings{ramesh2021zero,
  title={Zero-shot text-to-image generation},
  author={Ramesh, Aditya and Pavlov, Mikhail and Goh, Gabriel and Gray, Scott and Voss, Chelsea and Radford, Alec and Chen, Mark and Sutskever, Ilya},
  booktitle={International conference on machine learning},
  pages={8821--8831},
  year={2021},
  organization={Pmlr}
}

@inproceedings{rombach2022high,
  title={High-resolution image synthesis with latent diffusion models},
  author={Rombach, Robin and Blattmann, Andreas and Lorenz, Dominik and Esser, Patrick and Ommer, Bj{\"o}rn},
  booktitle={Proceedings of the IEEE/CVF conference on computer vision and pattern recognition},
  pages={10684--10695},
  year={2022}
}

@article{watson2023novo,
  title={De novo design of protein structure and function with RFdiffusion},
  author={Watson, Joseph L and Juergens, David and Bennett, Nathaniel R and Trippe, Brian L and Yim, Jason and Eisenach, Helen E and Ahern, Woody and Borst, Andrew J and Ragotte, Robert J and Milles, Lukas F and others},
  journal={Nature},
  volume={620},
  number={7976},
  pages={1089--1100},
  year={2023},
  publisher={Nature Publishing Group UK London}
}

@inproceedings{hoogeboom2022equivariant,
  title={Equivariant diffusion for molecule generation in 3d},
  author={Hoogeboom, Emiel and Satorras, V{\i}ctor Garcia and Vignac, Cl{\'e}ment and Welling, Max},
  booktitle={International conference on machine learning},
  pages={8867--8887},
  year={2022},
  organization={PMLR}
}

@article{ho2022classifier,
  title={Classifier-free diffusion guidance},
  author={Ho, Jonathan and Salimans, Tim},
  journal={arXiv preprint arXiv:2207.12598},
  year={2022}
}

@article{russakovsky2015imagenet,
  title={Imagenet large scale visual recognition challenge},
  author={Russakovsky, Olga and Deng, Jia and Su, Hao and Krause, Jonathan and Satheesh, Sanjeev and Ma, Sean and Huang, Zhiheng and Karpathy, Andrej and Khosla, Aditya and Bernstein, Michael and others},
  volume={115},
  journal={International journal of computer vision},
  pages={211--252},
  year={2015},
  publisher={Springer}
}

@inproceedings{karras2024analyzing,
  title={Analyzing and improving the training dynamics of diffusion models},
  author={Karras, Tero and Aittala, Miika and Lehtinen, Jaakko and Hellsten, Janne and Aila, Timo and Laine, Samuli},
  booktitle={Proceedings of the IEEE/CVF Conference on Computer Vision and Pattern Recognition},
  pages={24174--24184},
  year={2024}
}

@article{zhou2025inductive,
  title={Inductive Moment Matching},
  author={Zhou, Linqi and Ermon, Stefano and Song, Jiaming},
  journal={arXiv preprint arXiv:2503.07565},
  year={2025}
}

@inproceedings{peebles2023scalable,
  title={Scalable diffusion models with transformers},
  author={Peebles, William and Xie, Saining},
  booktitle={Proceedings of the IEEE/CVF international conference on computer vision},
  pages={4195--4205},
  year={2023}
}

@article{singhal2025general,
  title={A general framework for inference-time scaling and steering of diffusion models},
  author={Singhal, Raghav and Horvitz, Zachary and Teehan, Ryan and Ren, Mengye and Yu, Zhou and McKeown, Kathleen and Ranganath, Rajesh},
  journal={arXiv preprint arXiv:2501.06848},
  year={2025}
}

@article{ma2025inference,
  title={Inference-time scaling for diffusion models beyond scaling denoising steps},
  author={Ma, Nanye and Tong, Shangyuan and Jia, Haolin and Hu, Hexiang and Su, Yu-Chuan and Zhang, Mingda and Yang, Xuan and Li, Yandong and Jaakkola, Tommi and Jia, Xuhui and others},
  journal={arXiv preprint arXiv:2501.09732},
  year={2025}
}

@inproceedings{meng2023distillation,
  title={On distillation of guided diffusion models},
  author={Meng, Chenlin and Rombach, Robin and Gao, Ruiqi and Kingma, Diederik and Ermon, Stefano and Ho, Jonathan and Salimans, Tim},
  booktitle={Proceedings of the IEEE/CVF Conference on Computer Vision and Pattern Recognition},
  pages={14297--14306},
  year={2023}
}

@article{dhariwal2021diffusion,
  title={Diffusion models beat gans on image synthesis},
  author={Dhariwal, Prafulla and Nichol, Alexander},
  journal={Advances in neural information processing systems},
  volume={34},
  pages={8780--8794},
  year={2021}
}

@inproceedings{gu2023boot,
  title={Boot: Data-free distillation of denoising diffusion models with bootstrapping},
  author={Gu, Jiatao and Zhai, Shuangfei and Zhang, Yizhe and Liu, Lingjie and Susskind, Joshua M},
  booktitle={ICML 2023 Workshop on Structured Probabilistic Inference $\&$ Generative Modeling},
  year={2023}
}

@article{yin2024improved,
  title={Improved distribution matching distillation for fast image synthesis},
  author={Yin, Tianwei and Gharbi, Micha{\"e}l and Park, Taesung and Zhang, Richard and Shechtman, Eli and Durand, Fredo and Freeman, Bill},
  journal={Advances in neural information processing systems},
  volume={37},
  pages={47455--47487},
  year={2024}
}

@article{heek2024multistep,
  title={Multistep consistency models},
  author={Heek, Jonathan and Hoogeboom, Emiel and Salimans, Tim},
  journal={arXiv preprint arXiv:2403.06807},
  year={2024}
}

@inproceedings{zheng2023fast,
  title={Fast sampling of diffusion models via operator learning},
  author={Zheng, Hongkai and Nie, Weili and Vahdat, Arash and Azizzadenesheli, Kamyar and Anandkumar, Anima},
  booktitle={International conference on machine learning},
  pages={42390--42402},
  year={2023},
  organization={PMLR}
}

@article{geng2024consistency,
  title={Consistency models made easy},
  author={Geng, Zhengyang and Pokle, Ashwini and Luo, William and Lin, Justin and Kolter, J Zico},
  journal={arXiv preprint arXiv:2406.14548},
  year={2024}
}

@article{lee2024truncated,
  title={Truncated Consistency Models},
  author={Lee, Sangyun and Xu, Yilun and Geffner, Tomas and Fanti, Giulia and Kreis, Karsten and Vahdat, Arash and Nie, Weili},
  journal={arXiv preprint arXiv:2410.14895},
  year={2024}
}

@inproceedings{yin2024one,
  title={One-step diffusion with distribution matching distillation},
  author={Yin, Tianwei and Gharbi, Micha{\"e}l and Zhang, Richard and Shechtman, Eli and Durand, Fredo and Freeman, William T and Park, Taesung},
  booktitle={Proceedings of the IEEE/CVF conference on computer vision and pattern recognition},
  pages={6613--6623},
  year={2024}
}

@inproceedings{zhou2024score,
  title={Score identity distillation: Exponentially fast distillation of pretrained diffusion models for one-step generation},
  author={Zhou, Mingyuan and Zheng, Huangjie and Wang, Zhendong and Yin, Mingzhang and Huang, Hai},
  booktitle={Forty-first International Conference on Machine Learning},
  year={2024}
}

@inproceedings{sauer2024adversarial,
  title={Adversarial diffusion distillation},
  author={Sauer, Axel and Lorenz, Dominik and Blattmann, Andreas and Rombach, Robin},
  booktitle={European Conference on Computer Vision},
  pages={87--103},
  year={2024},
  organization={Springer}
}

@article{xu2025one,
  title={One-step Diffusion Models with $ f $-Divergence Distribution Matching},
  author={Xu, Yilun and Nie, Weili and Vahdat, Arash},
  journal={arXiv preprint arXiv:2502.15681},
  year={2025}
}

@article{poole2022dreamfusion,
  title={Dreamfusion: Text-to-3d using 2d diffusion},
  author={Poole, Ben and Jain, Ajay and Barron, Jonathan T and Mildenhall, Ben},
  journal={arXiv preprint arXiv:2209.14988},
  year={2022}
}

@article{wang2023prolificdreamer,
  title={Prolificdreamer: High-fidelity and diverse text-to-3d generation with variational score distillation},
  author={Wang, Zhengyi and Lu, Cheng and Wang, Yikai and Bao, Fan and Li, Chongxuan and Su, Hang and Zhu, Jun},
  journal={Advances in Neural Information Processing Systems},
  volume={36},
  pages={8406--8441},
  year={2023}
}

@article{vaswani2017attention,
  title={Attention is all you need},
  author={Vaswani, Ashish and Shazeer, Noam and Parmar, Niki and Uszkoreit, Jakob and Jones, Llion and Gomez, Aidan N and Kaiser, {\L}ukasz and Polosukhin, Illia},
  journal={Advances in neural information processing systems},
  volume={30},
  year={2017}
}

@article{kingma2014adam,
  title={Adam: A method for stochastic optimization},
  author={Kingma, Diederik P, Ba Jimmy},
  journal={arXiv preprint arXiv:1412.6980},
  year={2014}
}

@article{berthelot2023tract,
  title={Tract: Denoising diffusion models with transitive closure time-distillation},
  author={Berthelot, David and Autef, Arnaud and Lin, Jierui and Yap, Dian Ang and Zhai, Shuangfei and Hu, Siyuan and Zheng, Daniel and Talbott, Walter and Gu, Eric},
  journal={arXiv preprint arXiv:2303.04248},
  year={2023}
}

@article{geng2025mean,
  title={Mean Flows for One-step Generative Modeling},
  author={Geng, Zhengyang and Deng, Mingyang and Bai, Xingjian and Kolter, J Zico and He, Kaiming},
  journal={arXiv preprint arXiv:2505.13447},
  year={2025}
}

@article{sabour2025align,
  title={Align Your Flow: Scaling Continuous-Time Flow Map Distillation},
  author={Sabour, Amirmojtaba and Fidler, Sanja and Kreis, Karsten},
  journal={arXiv preprint arXiv:2506.14603},
  year={2025}
}

@article{peng2025flow,
  title={Flow-anchored consistency models},
  author={Peng, Yansong and Zhu, Kai and Liu, Yu and Wu, Pingyu and Li, Hebei and Sun, Xiaoyan and Wu, Feng},
  journal={arXiv preprint arXiv:2507.03738},
  year={2025}
}

@article{luo2023diff,
  title={Diff-instruct: A universal approach for transferring knowledge from pre-trained diffusion models},
  author={Luo, Weijian and Hu, Tianyang and Zhang, Shifeng and Sun, Jiacheng and Li, Zhenguo and Zhang, Zhihua},
  journal={Advances in Neural Information Processing Systems},
  volume={36},
  pages={76525--76546},
  year={2023}
}

@article{hu2022lora,
  title={Lora: Low-rank adaptation of large language models.},
  author={Hu, Edward J and Shen, Yelong and Wallis, Phillip and Allen-Zhu, Zeyuan and Li, Yuanzhi and Wang, Shean and Wang, Lu and Chen, Weizhu and others},
  journal={ICLR},
  volume={1},
  number={2},
  pages={3},
  year={2022}
}

@inproceedings{nguyen2024swiftbrush,
  title={Swiftbrush: One-step text-to-image diffusion model with variational score distillation},
  author={Nguyen, Thuan Hoang and Tran, Anh},
  booktitle={Proceedings of the IEEE/CVF Conference on Computer Vision and Pattern Recognition},
  pages={7807--7816},
  year={2024}
}

@article{zhou2024adversarial,
  title={Adversarial score identity distillation: Rapidly surpassing the teacher in one step},
  author={Zhou, Mingyuan and Zheng, Huangjie and Gu, Yi and Wang, Zhendong and Huang, Hai},
  journal={arXiv preprint arXiv:2410.14919},
  year={2024}
}

@article{karras2024guiding,
  title={Guiding a diffusion model with a bad version of itself},
  author={Karras, Tero and Aittala, Miika and Kynk{\"a}{\"a}nniemi, Tuomas and Lehtinen, Jaakko and Aila, Timo and Laine, Samuli},
  journal={Advances in Neural Information Processing Systems},
  volume={37},
  pages={52996--53021},
  year={2024}
}

@inproceedings{zheng2024characteristic,
  title={Characteristic Guidance: Non-linear Correction for Diffusion Model at Large Guidance Scale},
  author={Zheng, Candi and Lan, Yuan},
  booktitle={International Conference on Machine Learning},
  pages={61386--61412},
  year={2024},
  organization={PMLR}
}

@inproceedings{esser2024scaling,
  title={Scaling rectified flow transformers for high-resolution image synthesis},
  author={Esser, Patrick and Kulal, Sumith and Blattmann, Andreas and Entezari, Rahim and M{\"u}ller, Jonas and Saini, Harry and Levi, Yam and Lorenz, Dominik and Sauer, Axel and Boesel, Frederic and others},
  booktitle={Forty-first international conference on machine learning},
  year={2024}
}

@article{heusel2017gans,
  title={Gans trained by a two time-scale update rule converge to a local nash equilibrium},
  author={Heusel, Martin and Ramsauer, Hubert and Unterthiner, Thomas and Nessler, Bernhard and Hochreiter, Sepp},
  journal={Advances in neural information processing systems},
  volume={30},
  year={2017}
}

@article{kynkaanniemi2024applying,
  title={Applying guidance in a limited interval improves sample and distribution quality in diffusion models},
  author={Kynk{\"a}{\"a}nniemi, Tuomas and Aittala, Miika and Karras, Tero and Laine, Samuli and Aila, Timo and Lehtinen, Jaakko},
  journal={Advances in Neural Information Processing Systems},
  volume={37},
  pages={122458--122483},
  year={2024}
}

@article{kendall2017uncertainties,
  title={What uncertainties do we need in bayesian deep learning for computer vision?},
  author={Kendall, Alex and Gal, Yarin},
  journal={Advances in neural information processing systems},
  volume={30},
  year={2017}
}

@inproceedings{esser2021taming,
  title={Taming transformers for high-resolution image synthesis},
  author={Esser, Patrick and Rombach, Robin and Ommer, Bjorn},
  booktitle={Proceedings of the IEEE/CVF conference on computer vision and pattern recognition},
  pages={12873--12883},
  year={2021}
}

@article{yu2024representation,
  title={Representation alignment for generation: Training diffusion transformers is easier than you think},
  author={Yu, Sihyun and Kwak, Sangkyung and Jang, Huiwon and Jeong, Jongheon and Huang, Jonathan and Shin, Jinwoo and Xie, Saining},
  journal={arXiv preprint arXiv:2410.06940},
  year={2024}
}

@article{zheng2025diffusion,
  title={Diffusion Transformers with Representation Autoencoders},
  author={Zheng, Boyang and Ma, Nanye and Tong, Shengbang and Xie, Saining},
  journal={arXiv preprint arXiv:2510.11690},
  year={2025}
}

@article{leng2025repa,
  title={Repa-e: Unlocking vae for end-to-end tuning with latent diffusion transformers},
  author={Leng, Xingjian and Singh, Jaskirat and Hou, Yunzhong and Xing, Zhenchang and Xie, Saining and Zheng, Liang},
  journal={arXiv preprint arXiv:2504.10483},
  year={2025}
}

@article{liu2025learning,
  title={Learning to Integrate Diffusion ODEs by Averaging the Derivatives},
  author={Liu, Wenze and Yue, Xiangyu},
  journal={arXiv preprint arXiv:2505.14502},
  year={2025}
}

@article{chen2025pi,
  title={pi-Flow: Policy-Based Few-Step Generation via Imitation Distillation},
  author={Chen, Hansheng and Zhang, Kai and Tan, Hao and Guibas, Leonidas and Wetzstein, Gordon and Bi, Sai},
  journal={arXiv preprint arXiv:2510.14974},
  year={2025}
}

@article{liu2025flow,
  title={Flow-grpo: Training flow matching models via online rl},
  author={Liu, Jie and Liu, Gongye and Liang, Jiajun and Li, Yangguang and Liu, Jiaheng and Wang, Xintao and Wan, Pengfei and Zhang, Di and Ouyang, Wanli},
  journal={arXiv preprint arXiv:2505.05470},
  year={2025}
}

@article{boffi2025build,
  title={How to build a consistency model: Learning flow maps via self-distillation},
  author={Boffi, Nicholas M and Albergo, Michael S and Vanden-Eijnden, Eric},
  journal={arXiv preprint arXiv:2505.18825},
  year={2025}
}

@inproceedings{wallace2024diffusion,
  title={Diffusion model alignment using direct preference optimization},
  author={Wallace, Bram and Dang, Meihua and Rafailov, Rafael and Zhou, Linqi and Lou, Aaron and Purushwalkam, Senthil and Ermon, Stefano and Xiong, Caiming and Joty, Shafiq and Naik, Nikhil},
  booktitle={Proceedings of the IEEE/CVF Conference on Computer Vision and Pattern Recognition},
  pages={8228--8238},
  year={2024}
}

@inproceedings{zhang2024large,
  title={Large-scale reinforcement learning for diffusion models},
  author={Zhang, Yinan and Tzeng, Eric and Du, Yilun and Kislyuk, Dmitry},
  booktitle={European Conference on Computer Vision},
  pages={1--17},
  year={2024},
  organization={Springer}
}

@article{seedream2025seedream,
  title={Seedream 4.0: Toward next-generation multimodal image generation},
  author={Seedream, Team and Chen, Yunpeng and Gao, Yu and Gong, Lixue and Guo, Meng and Guo, Qiushan and Guo, Zhiyao and Hou, Xiaoxia and Huang, Weilin and Huang, Yixuan and others},
  journal={arXiv preprint arXiv:2509.20427},
  year={2025}
}

@article{labs2025flux,
  title={FLUX. 1 Kontext: Flow Matching for In-Context Image Generation and Editing in Latent Space},
  author={Labs, Black Forest and Batifol, Stephen and Blattmann, Andreas and Boesel, Frederic and Consul, Saksham and Diagne, Cyril and Dockhorn, Tim and English, Jack and English, Zion and Esser, Patrick and others},
  journal={arXiv preprint arXiv:2506.15742},
  year={2025}
}

@article{cao2025hunyuanimage,
  title={Hunyuanimage 3.0 technical report},
  author={Cao, Siyu and Chen, Hangting and Chen, Peng and Cheng, Yiji and Cui, Yutao and Deng, Xinchi and Dong, Ying and Gong, Kipper and Gu, Tianpeng and Gu, Xiusen and others},
  journal={arXiv preprint arXiv:2509.23951},
  year={2025}
}

@article{wu2025qwen,
  title={Qwen-image technical report},
  author={Wu, Chenfei and Li, Jiahao and Zhou, Jingren and Lin, Junyang and Gao, Kaiyuan and Yan, Kun and Yin, Sheng-ming and Bai, Shuai and Xu, Xiao and Chen, Yilei and others},
  journal={arXiv preprint arXiv:2508.02324},
  year={2025}
}

@article{xu2023imagereward,
  title={Imagereward: Learning and evaluating human preferences for text-to-image generation},
  author={Xu, Jiazheng and Liu, Xiao and Wu, Yuchen and Tong, Yuxuan and Li, Qinkai and Ding, Ming and Tang, Jie and Dong, Yuxiao},
  journal={Advances in Neural Information Processing Systems},
  volume={36},
  pages={15903--15935},
  year={2023}
}

@article{li2025mixgrpo,
  title={Mixgrpo: Unlocking flow-based grpo efficiency with mixed ode-sde},
  author={Li, Junzhe and Cui, Yutao and Huang, Tao and Ma, Yinping and Fan, Chun and Yang, Miles and Zhong, Zhao},
  journal={arXiv preprint arXiv:2507.21802},
  year={2025}
}

@article{song2025selective,
  title={Selective Underfitting in Diffusion Models},
  author={Song, Kiwhan and Kim, Jaeyeon and Chen, Sitan and Du, Yilun and Kakade, Sham and Sitzmann, Vincent},
  journal={arXiv preprint arXiv:2510.01378},
  year={2025}
}

@article{kamb2024analytic,
  title={An analytic theory of creativity in convolutional diffusion models},
  author={Kamb, Mason and Ganguli, Surya},
  journal={arXiv preprint arXiv:2412.20292},
  year={2024}
}

@article{niedoba2024towards,
  title={Towards a mechanistic explanation of diffusion model generalization},
  author={Niedoba, Matthew and Zwartsenberg, Berend and Murphy, Kevin and Wood, Frank},
  journal={arXiv preprint arXiv:2411.19339},
  year={2024}
}

@article{scarvelis2023closed,
  title={Closed-form diffusion models},
  author={Scarvelis, Christopher and Borde, Haitz S{\'a}ez de Oc{\'a}riz and Solomon, Justin},
  journal={arXiv preprint arXiv:2310.12395},
  year={2023}
}

@article{pidstrigach2022score,
  title={Score-based generative models detect manifolds},
  author={Pidstrigach, Jakiw},
  journal={Advances in Neural Information Processing Systems},
  volume={35},
  pages={35852--35865},
  year={2022}
}

@inproceedings{yoon2023diffusion,
  title={Diffusion probabilistic models generalize when they fail to memorize},
  author={Yoon, TaeHo and Choi, Joo Young and Kwon, Sehyun and Ryu, Ernest K},
  booktitle={ICML 2023 workshop on structured probabilistic inference \& generative modeling},
  year={2023}
}

@article{yi2023generalization,
  title={On the generalization of diffusion model},
  author={Yi, Mingyang and Sun, Jiacheng and Li, Zhenguo},
  journal={arXiv preprint arXiv:2305.14712},
  year={2023}
}

@article{kadkhodaie2023generalization,
  title={Generalization in diffusion models arises from geometry-adaptive harmonic representations},
  author={Kadkhodaie, Zahra and Guth, Florentin and Simoncelli, Eero P and Mallat, St{\'e}phane},
  journal={arXiv preprint arXiv:2310.02557},
  year={2023}
}

@article{hinton2015distilling,
  title={Distilling the knowledge in a neural network},
  author={Hinton, Geoffrey and Vinyals, Oriol and Dean, Jeff},
  journal={arXiv preprint arXiv:1503.02531},
  year={2015}
}

@article{tee2024physics,
  title={Physics informed distillation for diffusion models},
  author={Tee, Joshua Tian Jin and Zhang, Kang and Yoon, Hee Suk and Gowda, Dhananjaya Nagaraja and Kim, Chanwoo and Yoo, Chang D},
  journal={arXiv preprint arXiv:2411.08378},
  year={2024}
}

@article{lee2025decoupled,
  title={Decoupled MeanFlow: Turning Flow Models into Flow Maps for Accelerated Sampling},
  author={Lee, Kyungmin and Yu, Sihyun and Shin, Jinwoo},
  journal={arXiv preprint arXiv:2510.24474},
  year={2025}
}

@book{allgower2012numerical,
  title={Numerical continuation methods: an introduction},
  author={Allgower, Eugene L and Georg, Kurt},
  volume={13},
  year={2012},
  publisher={Springer Science \& Business Media}
}

@article{goodfellow2014generative,
  title={Generative adversarial nets},
  author={Goodfellow, Ian J and Pouget-Abadie, Jean and Mirza, Mehdi and Xu, Bing and Warde-Farley, David and Ozair, Sherjil and Courville, Aaron and Bengio, Yoshua},
  journal={Advances in neural information processing systems},
  volume={27},
  year={2014}
}

@book{suli2003introduction,
  title={An introduction to numerical analysis},
  author={S{\"u}li, Endre and Mayers, David F},
  year={2003},
  publisher={Cambridge university press}
}

@article{zheng2025large,
  title={Large Scale Diffusion Distillation via Score-Regularized Continuous-Time Consistency},
  author={Zheng, Kaiwen and Wang, Yuji and Ma, Qianli and Chen, Huayu and Zhang, Jintao and Balaji, Yogesh and Chen, Jianfei and Liu, Ming-Yu and Zhu, Jun and Zhang, Qinsheng},
  journal={arXiv preprint arXiv:2510.08431},
  year={2025}
}

@inproceedings{sauer2024fast,
  title={Fast high-resolution image synthesis with latent adversarial diffusion distillation},
  author={Sauer, Axel and Boesel, Frederic and Dockhorn, Tim and Blattmann, Andreas and Esser, Patrick and Rombach, Robin},
  booktitle={SIGGRAPH Asia 2024 Conference Papers},
  pages={1--11},
  year={2024}
}

@inproceedings{xu2024ufogen,
  title={Ufogen: You forward once large scale text-to-image generation via diffusion gans},
  author={Xu, Yanwu and Zhao, Yang and Xiao, Zhisheng and Hou, Tingbo},
  booktitle={Proceedings of the IEEE/CVF Conference on Computer Vision and Pattern Recognition},
  pages={8196--8206},
  year={2024}
}

@article{xie2024distillation,
  title={Em distillation for one-step diffusion models},
  author={Xie, Sirui and Xiao, Zhisheng and Kingma, Diederik and Hou, Tingbo and Wu, Ying Nian and Murphy, Kevin P and Salimans, Tim and Poole, Ben and Gao, Ruiqi},
  journal={Advances in Neural Information Processing Systems},
  volume={37},
  pages={45073--45104},
  year={2024}
}

@article{luo2024one,
  title={One-step diffusion distillation through score implicit matching},
  author={Luo, Weijian and Huang, Zemin and Geng, Zhengyang and Kolter, J Zico and Qi, Guo-jun},
  journal={Advances in Neural Information Processing Systems},
  volume={37},
  pages={115377--115408},
  year={2024}
}

@article{salimans2024multistep,
  title={Multistep distillation of diffusion models via moment matching},
  author={Salimans, Tim and Mensink, Thomas and Heek, Jonathan and Hoogeboom, Emiel},
  journal={Advances in Neural Information Processing Systems},
  volume={37},
  pages={36046--36070},
  year={2024}
}

@inproceedings{liu2023instaflow,
  title={Instaflow: One step is enough for high-quality diffusion-based text-to-image generation},
  author={Liu, Xingchao and Zhang, Xiwen and Ma, Jianzhu and Peng, Jian and others},
  booktitle={The Twelfth International Conference on Learning Representations},
  year={2023}
}

@article{ren2024hyper,
  title={Hyper-sd: Trajectory segmented consistency model for efficient image synthesis},
  author={Ren, Yuxi and Xia, Xin and Lu, Yanzuo and Zhang, Jiacheng and Wu, Jie and Xie, Pan and Wang, Xing and Xiao, Xuefeng},
  journal={Advances in Neural Information Processing Systems},
  volume={37},
  pages={117340--117362},
  year={2024}
}

@article{salimans2016improved,
  title={Improved techniques for training gans},
  author={Salimans, Tim and Goodfellow, Ian and Zaremba, Wojciech and Cheung, Vicki and Radford, Alec and Chen, Xi},
  journal={Advances in neural information processing systems},
  volume={29},
  year={2016}
}
}

\clearpage
\onecolumn

\appendix
\crefalias{section}{appendix}
\crefalias{subsection}{appendix}

\section{Extended Technical Discussion}
\label{sec:ext_discuss}

We continue our technical discussion in \cref{sec:method,sec:exp}, dividing this section into three parts.

\subsection{More on the Prediction Objective}
\label{sec:ext_pred}

\paragraph{Optimality of training with \cref{eq:p_loss_c}.}
Recall that we have already established that the loss value of \cref{eq:p_loss_c} equals $\E_{\vz,\delta}\| \partial_\delta \vf_{\theta}(\vz,\delta) - \vu(\vf_{\theta}(\vz,\delta),1-\delta) \|^2$. Thus, at optimality, we have $\partial_\delta \vf_{\theta}(\vz,\delta) = \vu(\vf_{\theta}(\vz,\delta),1-\delta)$ for all $\vz\sim\pi$ and $\delta\in[0,1]$. Furthermore, we note that the parametric relationship $\vf_\theta(\vz,\delta)=\vz+\delta\vF_\theta(\vz,\delta)$, which satisfies $\vf_\theta(\vz,0)=\vz$ for all $\vz\sim\pi$ by design. We show that, assuming standard regularity conditions on $\vu$, we ensure $\vphi_\vu$ is uniquely defined via an initial value problem by the Picard-Lindelöf theorem~\citep{coddington1956theory}, thus providing a well-defined criterion for $\vf_\theta$. This is a direct result of the fundamental theory of ODE.

\begin{proposition}
\label{thm:condition}
    Let $\vphi_\vu(\vx_t,t,s)$ be defined as in \cref{eq:solution}, and we assume that there exists some $L>0$ for all $\vy, \vy'\in\R^d$ and $r\in[0,1]$, $\|\vu(\vy,r)-\vu(\vy',r)\|\leq L\|\vy-\vy'\|$. We further assume that $\vf_\theta(\vz,\delta)$ is continuously differentiable for $\delta\in[0,1]$. Then, we have $\vf_\theta(\vz,\delta) = \vphi_\vu(\vz,1,1-\delta)$ for all $\vz\sim\pi,\delta\in[0,1]$, if and only if, (i) $\vf_\theta(\vz,0)=\vz$ for all $\vz\sim\pi$, and (ii) $\partial_\delta \vf_\theta(\vz, \delta) = \vu(\vf_\theta(\vz, \delta), 1-\delta)$ for all $\vz\sim\pi,\delta\in[0,1]$.
\end{proposition}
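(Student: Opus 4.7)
The plan is to recognize that this is essentially a restatement of the existence-uniqueness theorem for ODEs under a simple time reparameterization. I would introduce the auxiliary curve $\vy(\delta) \coloneq \vphi_\vu(\vz, 1, 1-\delta)$ and show that both $\vy$ and $\vf_\theta(\vz, \cdot)$ satisfy the same initial value problem in $\delta$ on $[0,1]$; the conclusion then follows from Picard-Lindel\"of using the global Lipschitz hypothesis on $\vu$.

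First I would establish that $\vy$ itself satisfies the target IVP. The initial condition $\vy(0) = \vphi_\vu(\vz, 1, 1) = \vz$ is immediate from \cref{eq:solution} (the integral over an empty interval vanishes). Differentiating the integral representation $\vy(\delta) = \vz + \int_1^{1-\delta} -\vu(\vx(\tau), \tau)\, d\tau$ via the fundamental theorem of calculus and the chain rule (the upper limit $1-\delta$ depends on $\delta$ with derivative $-1$) yields $\partial_\delta \vy(\delta) = \vu(\vy(\delta), 1-\delta)$. This takes care of the ($\Rightarrow$) direction entirely: if $\vf_\theta(\vz, \delta) = \vy(\delta)$ identically, conditions (i) and (ii) are inherited from $\vy$.

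For the ($\Leftarrow$) direction I would treat $\vz$ as fixed and view both $\vf_\theta(\vz, \cdot)$ and $\vy$ as continuously differentiable curves on $[0,1]$ solving the non-autonomous IVP
\begin{equation*}
    \vw'(\delta) = \vG(\vw(\delta), \delta), \qquad \vw(0) = \vz,
\end{equation*}
where $\vG(\vw, \delta) \coloneq \vu(\vw, 1-\delta)$. The Lipschitz hypothesis on $\vu$ transfers directly to $\vG$ uniformly in $\delta$, so Picard-Lindel\"of applies on the compact interval $[0,1]$ and the solution is unique. Hence $\vf_\theta(\vz, \delta) = \vy(\delta) = \vphi_\vu(\vz, 1, 1-\delta)$ for all $\delta \in [0,1]$, for each $\vz$ in the support of $\pi$.

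The main obstacle is essentially bookkeeping rather than mathematical depth: one must be careful with the sign introduced by the time reversal $\tau = 1-\delta$ (the backward integration in \cref{eq:solution} is exactly what makes $\partial_\delta \vf_\theta = +\vu$ rather than $-\vu$), and one should explicitly note that the assumed continuous differentiability of $\vf_\theta$ in $\delta$ is what licenses treating it as a classical ODE solution so Picard-Lindel\"of can be invoked. No additional regularity in $\vz$ is needed because the argument proceeds pointwise in $\vz$.
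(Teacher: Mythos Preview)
Your proposal is correct and follows essentially the same approach as the paper: both directions reduce to identifying $\vf_\theta(\vz,\cdot)$ and $\vphi_\vu(\vz,1,1-\cdot)$ as solutions of the same initial value problem $\vw'(\delta)=\vu(\vw(\delta),1-\delta)$, $\vw(0)=\vz$, and then invoking Picard--Lindel\"of via the Lipschitz assumption. Your explicit introduction of $\vy(\delta)$ and the reparameterized field $\vG(\vw,\delta)=\vu(\vw,1-\delta)$ is a slight notational refinement, but the argument is otherwise identical to the paper's.
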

\begin{proof}
By assuming $\vu(\vy, r)$ continuous in $r$ and Lipschitz continuous in $\vy$, we could apply the Picard-Lindelöf theorem~\citep{coddington1956theory} to guarantee the existence and uniqueness of its solution $\vphi_\vu$, which is %
\begin{align*}
    \vphi_\vu(\vx_t,t,s) = \vx_t + \int_t^{s} -\vu(\vx(\tau), \tau)d\tau.
\end{align*}

$\Longrightarrow$:

(i). Since the equality $\vf_\theta(\vz,\delta) = \vphi_\vu(\vz,1,1-\delta)$ holds for all $\delta\in[0,1]$, it must hold for $\delta=0$. We know that $\vphi_\vu(\vz,1,1)=\vz$. Thus, for any $\vz\sim\pi$,
\begin{align*}
    \vf_\theta(\vz,0) = \vphi_\vu(\vz,1,1)=\vz.
\end{align*}

(ii). Given that $\vf_\theta(\vz,\delta) = \vphi_\vu(\vz,1,1-\delta)$ for all $\delta\in[0,1]$, and both $\vf_\theta(\vz,\delta)$ and $\vphi_\vu(\vz,1,1-\delta)$ are continuously differentiable with respect to $\delta$ on $[0,1]$, their derivatives with respect to $\delta$ must be equal. That is, for all $\vz\sim\pi,\delta\in[0,1]$,
\begin{align*}
    \partial_\delta\vf_\theta(\vz,\delta) = \frac{d}{d\delta}\vphi_\vu(\vz,1,1-\delta) = \vu(\vphi_\vu(\vz,1,1-\delta),1-\delta) = \vu(\vf_\theta(\vz,\delta),1-\delta).
\end{align*}

$\Longleftarrow$:

Since $\vf_\theta(\vz,0)=\vz$ for all $\vz\sim\pi$, and $\partial_\delta \vf_\theta(\vz, \delta) = \vu(\vf_\theta(\vz, \delta), 1-\delta)$ for all $\vz\sim\pi,\delta\in [0,1]$, we know that $\vf_\theta(\vz,\delta)$ is a solution to the initial value problem on the interval $\delta\in[0,1]$:
\begin{align*}
    \frac{d}{d\delta}\vf_\theta(\vz,\delta) = \vu(\vf_\theta(\vz,\delta),1-\delta),\quad \vf_\theta(\vz,0) = \vz.
\end{align*}
By definition, $\vphi_\vu(\vz,1,1-\delta)$ is also a solution to the same IVP on $[0,1]$. Thus, we arrive at the equality between $\vf_\theta(\vz,\delta)$ and $\vphi_\vu(\vz,1,1-\delta)$.

\end{proof}

\paragraph{Discrete-time objective.}
In \cref{sec:predict}, we skipped over the development of a more flexible discrete-time training objective for \cref{eq:p_loss_c}, which does not require efficient JVP implementations. First, recall that our default continuous-time objective is
\begin{align*}
    \E_{\vz,\delta}\left\|\, \vF_\theta(\vz,\delta) + \sg\Big( \delta\partial_\delta \vF_{\theta}(\vz,\delta) - \vu\left(\vf_{\theta}(\vz,\delta),1-\delta\right)\Big) \,\right\|^2.\numberthis\label{eq:p_loss_c_app}
\end{align*}
Specifically, we can discretize the time horizon and approximate the partial derivative $\partial_\delta\vF_\theta$ using finite differences. Following the common practice in the sampling procedure of flow models~\citep{ho2020denoising}, we divide the time horizon into $N$ intervals with $N+1$ boundary points: $1=t_1>t_2>\dots>t_{N+1}=0$. We note that the generating velocity at point $\vf_\theta(\vz,1-t_i)$, or equivalently $\vf_\theta(\vz,t_1-t_i)$ where $1\leq i\leq N$, can be approximated by $\frac{1}{t_i-t_{i+1}}(\vf_\theta(\vz,t_1-t_{i+1})-\vf_\theta(\vz,t_1-t_i))$. Short-handing $t_a-t_b$ to $\delta_{a,b}$, we have the following discrete-time objective:
\begin{align*}
    \E_{\vz,i} \left\|\, \vF_\theta(\vz,\delta_{1,i+1}) + \sg\left( \delta_{1,i}\cdot \frac{\vF_{\theta}(\vz,\delta_{1,i+1}) - \vF_{\theta}(\vz,\delta_{1,i})}{\delta_{i,i+1}} - \vu\left(\vf_{\theta}(\vz,\delta_{1,i}), t_i\right) \right) \,\right\|^2,\numberthis\label{eq:p_loss_d}
\end{align*}
which approximates \cref{eq:p_loss_c} as $\delta_{i,i+1}\to0$. Just like \cref{eq:p_loss_c_app} equals to $\E_{\vz,\delta}\| \partial_\delta \vf_{\theta}(\vz,\delta) - \vu(\vf_{\theta}(\vz,\delta),1-\delta) \|^2$, we note that the loss value of \cref{eq:p_loss_d} is the same as $\E_{\vz,i}\left\| \left(\vf_{\theta}(\vz,\delta_{1,i+1}) - \vf_{\theta}(\vz,\delta_{1,i})\right) / \delta_{i,i+1} - \vu\left(\vf_{\theta}(\vz,\delta_{1,i}), t_i\right) \right\|^2$, where $\left(\vf_{\theta}(\vz,\delta_{1,i+1}) - \vf_{\theta}(\vz,\delta_{1,i})\right) / \delta_{i,i+1}$ is the model's generating velocity, approximating $\partial_\delta\vf_\theta(\vz,\delta)$. If we use $\vvG$ to further simplify the notations, we arrive at the following optimization gradients in \cref{eq:p_grad}:
\begin{align*}
    \nabla_\theta\, \E_{\vz,\delta}\biggl[ \vF_\theta(\vz,\delta)^\top \sg\Bigl( \underbrace{\vvG\bigl(\vf_{\theta}(\vz,\delta),1-\delta\bigr) - \vu\bigl(\vf_{\theta}(\vz,\delta),1-\delta\bigr)}_{\Delta_{\vvG,\vu}(\vf_{\theta}(\vz,\delta),1-\delta)} \Bigr) \biggr],\numberthis\label{eq:p_grad_app}
\end{align*}

\paragraph{Error analysis.}
With a discrete-time objective, we essentially train the student to trace a numerically integrated trajectory by an ODE solver. More concretely, the loss function inherently mimics a specific solver, whose error rate depends on the technique deployed. As written, \cref{eq:p_loss_d} can be understood as using an Euler method~\citep{song2020denoising,song2021scorebased} for solving the flow, specified at time steps $t_1, t_2, \dots, t_N$. Similar to existing fast-solver literature, we could readily adapt the objective using higher-order finite differences to mimic more precise, higher-order solvers~\citep{lu2022dpm,karras2022elucidating}. The classical theory from numerical analysis~\citep{suli2003introduction} indicates that a local truncation error at $t_i$ bounded by $\gO((\delta_{i,i+1})^{p+1})$ leads to the global error rate is $\gO((\delta_{\max})^p)$, where $\delta_{\max} \coloneqq \max_i\delta_{i,i+1}=\max_i(t_i - t_{i+1})$ is the maximum step size. This component is the discretization error, which is entirely independent of the network's approximation error (the inevitable training inaccuracy discussed in \cref{sec:predict}). Together, these two errors represent the total deviation of the student's trajectory from the true teacher flow defined by $\vu$. The total error can also be shown~\citep{boffi2025flow} to directly control the Wasserstein distance between the generative distribution of the student and the teacher.

\begin{wrapfigure}{r}{0.33\textwidth}
    \centering
    \includegraphics[width=\linewidth]{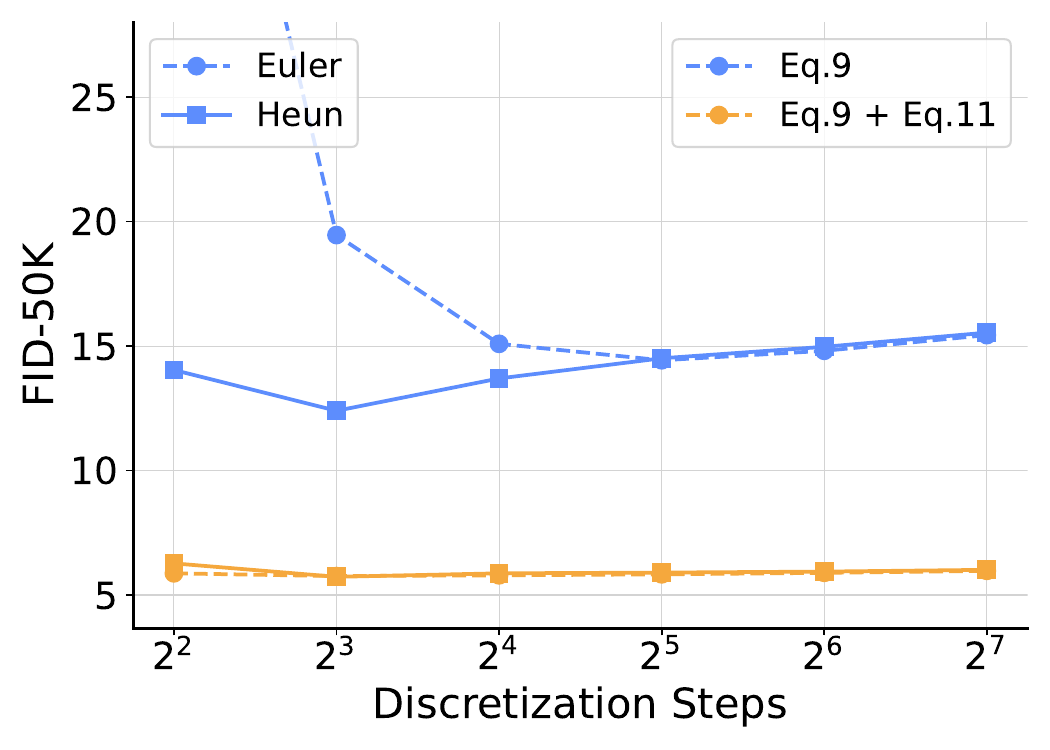}
    \caption{\textbf{Effects of discretization steps and solver type.} Our correction objective in \cref{eq:c_grad} makes the model robust against both.}
    \label{fig:discrete}
\end{wrapfigure}
\paragraph{Number of discretization steps.}
The above discussion suggests that one should take as small a step as possible, $(t_i - t_{i+1})\to0$, and as precise a solver as possible, for the discretization error to be minimal. This means that we should use the continuous-time setup if allowed, and for the discrete-time setting, we should choose to use a large number of discretization steps $N$. However, this is not the trend we observe in practice. Surprisingly, larger $N$ after a certain point actually degrades the performance. We attribute this behavior to the accumulated approximation error (discussed in \cref{sec:predict}) made by the model. Since $\vu$ is evaluated at the model's prediction, the goodness of the trajectory implicitly depends on the quality of $\vf_\theta$ itself, and such a problem exacerbates as $N$ becomes larger. We further note that incorporating higher-order solvers in $\vu$ like the Heun method~\citep{karras2022elucidating} is helpful, as it significantly reduces the discretization error. Fortunately, in \cref{fig:discrete}, we observe that while the model is sensitive to the number of discretization steps and solver choices when training with \cref{eq:p_grad} alone, the corrective signal in \cref{eq:c_grad} effectively renders such a decision unimportant, making our final proposal robust against $N$ and the precision of solvers.

\begin{wraptable}{r}{0.33\textwidth}
    \centering
    \tablestyle{4pt}{1.1}
    \footnotesize
    \begin{tabular}[t]{S[table-format=2.0, table-space-text-post=\%, mode=text]<{\%}lS[table-format=2.2, mode=text]}
        \toprule
        \multicolumn{1}{c}{\% of $\delta=0$} & $\delta$ sampling & \multicolumn{1}{c}{FID $\downarrow$} \\
        \midrule
        10 & LogitNormal(0, 1) & 12.40 \\
        \arrayrulecolor{black!30}\specialrule{\lightrulewidth}{1pt}{1pt}\arrayrulecolor{black}
        0 & LogitNormal(0, 1) & 47.09 \\
        30 & LogitNormal(0, 1) & 13.19 \\
        50 & LogitNormal(0, 1) & 14.30 \\
        \arrayrulecolor{black!30}\specialrule{\lightrulewidth}{1pt}{1pt}\arrayrulecolor{black}
        10 & LogitNormal(-0.8, 1) & 13.78 \\
        10 & LogitNormal(-0.4, 1) & 12.98 \\
        10 & LogitNormal(0, 1.2) & 12.60 \\
        \arrayrulecolor{black!30}\specialrule{\lightrulewidth}{1pt}{1pt}\arrayrulecolor{black}
        0 & LogitNormal(-0.8, 1.6) & 13.43 \\
        0 & LogitNormal(-0.4, 1.6) & 12.98 \\
        0 & LogitNormal(0, 1.6) & 12.59 \\
        \bottomrule
    \end{tabular}
    \caption{\textbf{Ablation of $\delta$ sampling.} A mixture of a logit-normal and a fixed $\delta{=}0$ works well.}
    \label{tab:delta_dist}
\end{wraptable}
\paragraph{Sampling of $\delta$.}
Another important aspect of the training algorithm is the time sampling of $\delta$. On one hand, the correctness of the model propagates from small jumps $\delta=0$ to large jumps $\delta=1$. On the other hand, the high-frequency features of the images only emerge at a lower noise level (large $\delta$). To balance these two notions, we investigate a mixture of a logit-normal distribution~\citep{karras2022elucidating,esser2024scaling} and a fixed value of $\delta=0$ (effectively a dropout on $\delta$). Results are presented in \cref{tab:delta_dist}. Note that for the discrete-time setting, we apply an additional step of the floor operation with respect to our predefined set of discrete time steps.

\paragraph{Confident region warmup.}
We observe that naively optimizing with \cref{eq:p_grad} brings about instability early on in training, mainly with the JVP approach. We hypothesize that it is because $\vu$ is evaluated at a state predicted by $\vf_\theta$, which can be out-of-distribution for $\vu$ at initialization. We propose to add noise to the predicted state before feeding it to $\vu$. Specifically, we replace $\vu(\vf_{\theta}(\vz,\delta),1{-}\delta)$, with $\vu(\vI_{t_c|1{-}\delta}(\vf_{\theta}(\vz,\delta),\vn), t_c)$, where $\vI_{t_c|t}$ is the generalized interpolating function, a transition kernel that takes samples from the noise level of $t$ to $t_c$. For the linear interpolation scheme~\citep{lipman2023flow,liu2023flow,ma2024sit}, we have $\vI_{t_c|t}(\vx_t,\vn)=\frac{1-t_c}{1-t}\vx_t+\sqrt{t_c^2 - \frac{(1-t_c)^2}{(1-t)^2}t^2}\vn$, assuming $t_c>t$; if $t_c\leq t$, it simply does nothing and returns $\vx_t$. At the start of training, we linearly decrease $t_c$ from 1 to 0 over a short warmup of 10K steps. Intuitively, this procedure ensures that $\vu$ always operates in a region where it is confident. After the inclusion of this warmup period, we do not find any instability of training $\vf_\theta$ with a reasonable sampling distribution of $\delta$.

\subsection{More on the Correction Objective}
\label{sec:ext_corr}

\paragraph{From minimizing IKL to aligning the noising velocities.}
We know from prior works~\citep{luo2023diff,yin2024one} that minimizing \cref{eq:ikl} \wrt $\theta$ gives us the following gradient:
\begin{align*}
    \E_{r,\vx_r}\left[ \left(\nabla_\theta \vx_r \right)^\top \left(\nabla_{\vx_r} \log q_r(\vx_r) - \nabla_{\vx_r} \log p_r(\vx_r)  \right) \right],\numberthis\label{eq:ikl_grad}
\end{align*}
where $\nabla_{\vx_r}\log q_r(\vx_r)$ and $\nabla_{\vx_r}\log p_r(\vx_r)$ are the score functions~\citep{song2019generative,hyvarinen2005estimation} of $q_r$ and $p_r$ respectively. It has also been shown that there exists a direct bijective translation between the score functions and the marginal velocities~\citep{ma2024sit}. Specifically, for the linear interpolation scheme~\citep{lipman2023flow,liu2023flow,ma2024sit} and our definition of the conditional velocity, we have:
\begin{align*}
    \vu(\vx_r,r) &= \frac{r}{1-r}\nabla_{\vx_r}\log p_r(\vx_r) + \frac{1}{1-r}\vx_r \\
    \nabla_{\vx_r}\log p_r(\vx_r) &= \frac{1-r}{r} \vu(\vx_r,r) - \frac{1}{r}\vx_r.
\end{align*}
Additionally, recall that the student predicts the data sample as $\vf_\theta(\vz, 1)$. With average velocity parameterization, the intermediate state is thus $\vx_r = \vI_r(\vf_\theta(\vz,1),\vn) = (1-r)(\vz+\vF_\theta(\vz,1))+r\vn$. Thus, we could rewrite \cref{eq:ikl_grad} as
\begin{align*}
    \E_{r,\vx_r}\left[ \frac{(1-r)^2}{r} \left(\nabla_\theta \vF_\theta(\vz,1) \right)^\top \left( \vv_\text{N}\bigl(\vI_r(\vf_{\theta}(\vz,1), \vn), r\bigr) - \vu\bigl(\vI_r(\vf_{\theta}(\vz,1), \vn), r\bigr) \right) \right],
\end{align*}
where $\vvN$ is the marginal noising velocity induced by the student's generated distribution $q$, similar to $\vu$ with $p$. Dropping the weighting $\frac{(1-r)^2}{r}$, as it does not change the optimal solution and provides us with easier-to-control gradients, we arrive at \cref{eq:c_grad}. Note that a different interpolation scheme does not change our investigation.
\begin{align*}
    \nabla_\theta\, \E_{\vz,\vn,r} \biggl[ \vF_\theta(\vz,1)^\top \sg\Bigl( \underbrace{\vv_\text{N}\bigl(\vI_r(\vf_{\theta}(\vz,1), \vn), r\bigr) - \vu\bigl(\vI_r(\vf_{\theta}(\vz,1), \vn), r\bigr)}_{\Delta_{\vv_\text{N},\vu}(\vI_r(\vf_{\theta}(\vz,1), \vn), r)} \Bigr) \biggr].\numberthis\label{eq:c_grad_app}
\end{align*}

\begin{wraptable}{r}{0.33\textwidth}
    \centering
    \tablestyle{4pt}{1.1}
    \footnotesize
    \begin{tabular}[t]{cS[table-format=1.2, mode=text]} 
        \toprule
        \multicolumn{1}{c}{learning rate for $\vg_\psi$} & \multicolumn{1}{c}{FID $\downarrow$} \\
        \midrule
        \num{3e-5} & 8.28 \\
        \num{6e-5} & 5.77 \\
        \num{8e-5} & 5.72 \\
        \num{1e-4} & 5.63 \\
        \bottomrule
    \end{tabular}
    \vspace{0.5cm}
    \caption{\textbf{Ablation of $\vg_\psi$'s lr.} A higher lr compared to $\vf_\theta$'s one ($\num{3e-5}$) is better.}
    \label{tab:psi_lr}
\end{wraptable}
\paragraph{Learning rate.}
Our correction objective defined in \cref{eq:c_grad} assumes access to $\vvN$, which is the velocity of the noising flow starting from the generated distribution, and we approximate it by training an auxiliary model $\vg_\psi$ concurrently with \cref{eq:flow_loss}. The quality of the optimization signal $\Delta_{\vvN,\vu}$ depends on the quality of this approximation. Empirically in \cref{tab:psi_lr}, we confirm this intuition and observe that the algorithm benefits from having a larger learning rate on $\vg_\psi$ compared to $\vf_\theta$. We note that it is also possible to adopt a two time-scale update rule~\citep{heusel2017gans,yin2024improved}, where we train multiple iterations on $\vg_\psi$ before updating $\vf_\theta$, but we do not explore this option given the overhead. In a related vein, we find that while training with only the prediction objective in \cref{eq:p_grad} permits a wide range of learning rates, incorporating the correction objective in \cref{eq:c_grad} prefers a smaller one on $\vf_\theta$ (adopted $\num{3e-5}$ compared to $\num{1e-4}$ in SiT).

\paragraph{Additional note on sampling of $r$ in \cref{eq:c_grad}.}
Recall that our understanding of \cref{eq:c_grad} with a PDE perspective through the continuity equation in \cref{sec:design} leads to our design of sampling $r$ more in the higher noise levels. We would like to make a brief note that, similar to \cref{eq:p_grad}, if $\vF_\theta$ is further parameterized, we should replace the first term in \cref{eq:c_grad} with the actual network output. Additionally, one needs to ensure $\Delta_{\vvN,\vu}$ across different $r$ values are roughly on the same scale first, so that their actual contributions can be precisely controlled via the sampling of $r$. Concretely, for our case of the linear interpolation and velocity parameterization~\citep{lipman2023flow,liu2023flow,ma2024sit}, $\Delta_{\vvN,\vu}$ is really just the difference between the direct outputs of two neural networks, which does not require further manipulations, assuming the network outputs are of consistent scale. In comparison, another commonly used setup is the EDM parameterization~\citep{karras2022elucidating}, whose velocity at $(\vx_r=\vf_\theta(\vz,1) + r\vn, r), \vn\sim\pi$ is of the form $\frac{c_\text{skip}(r)-1}{r}\vx_r+\frac{c_\text{out}(r)}{r}\vG$, where $\vG$ is the actual network (the auxiliary or teacher model). Notice that the outputs of $\vG$ are by design of constant norm across $r$, so the magnitude of $\Delta_{\vvN,\vu}$ is proportional to $\frac{c_\text{out}(r)}{r}$. In \citet{yin2024one}, an additional weighting of $r$ is applied, which makes the overall gradient contributions from \cref{eq:c_grad} proportional to $c_\text{out}(r)$. Substituting in the actual terms used, we have $\frac{0.5r}{\sqrt{0.5^2+r^2}}$, which heavily downplays the effect of small $r$ (lower noise levels).

\subsection{More on How to Combine Both}
\label{sec:ext_both}

\paragraph{Additional schedule on $\alpha$.}
We set $\alpha=0$ for the first 10K steps (recall that the prediction objective has a warmup of 10K steps), and follow it with a linear warmup of another 10K steps before $\alpha$ settles at a reference value $\alpha_\text{ref}$. While the model performance is shown to be robust across a wide range of $\alpha$ in \cref{fig:alpha}, we further notice a general trend that larger $\alpha$ learns faster in the beginning of training, and smaller $\alpha$ converges better as the training continues. Hence, for our REPA distillation tasks, we try out a simple inverse square root decay~\citep{kingma2014adam,karras2024analyzing}, and arrive at the following schedule on $\alpha$:
\begin{align*}
    \alpha = \alpha_\text{ref}\frac{\operatorname{clip}\left(\frac{n-T_\text{delay}}{T_\text{warmup}}, 0, 1\right)}{\sqrt{\max(n/T_\text{decay}, 1)}},
\end{align*}
where $n$ is the current training iteration, $T_\text{delay}=T_\text{warmup}=10K$, and $T_\text{decay}$ is the hyperparameter that controls the decay rate with $\infty$ indicating no decay applied ($\alpha$ stays at the constant value $\alpha_\text{ref}$ after warmup). We also would like to clarify that, considering the adopted 75-25 split, an $\alpha$ value of $0.3$ really means that the gradient contribution of the correction objective is around $10\%$ of that of the prediction objective.

\begin{wraptable}{r}{0.33\textwidth}
    \centering
    \tablestyle{4pt}{1.1}
    \footnotesize
    \begin{tabular}[t]{lS[table-format=1.2, mode=text]S[table-format=3.2, mode=text]} 
        \toprule
        \multicolumn{1}{l}{objective} & \multicolumn{1}{c}{FID $\downarrow$} & \multicolumn{1}{c}{IS $\uparrow$} \\
        \midrule
        \cref{eq:p_grad} & 5.78 & 257.02 \\
        \cref{eq:c_grad} & 3.19 & 258.15 \\
        \cref{eq:p_grad,eq:c_grad} & 1.69 & 273.49 \\
        \bottomrule
    \end{tabular}
    \caption{\textbf{Synergy between \cref{eq:p_grad,eq:c_grad}.} Together, they achieve performance that neither could attain in isolation.}
    \label{tab:synergy}
\end{wraptable}
\paragraph{Additional empirical results on prediction and correction synergy.}
In addition to presenting the model progress in \cref{fig:synergy}, we list the final performances of training with only prediction \cref{eq:p_grad}, only correction \cref{eq:c_grad}, and both in \cref{tab:synergy}. Specifically, all models are distilled from SiT-XL/2~\citep{ma2024sit}, and we compare them in terms of FID~\citep{heusel2017gans} and Inception Score~\citep{salimans2016improved} at 1.5M iterations (300 epochs). For each method, we select the optimal $\gamma$ based on the FID performances. Recall from \cref{fig:synergy}, at this point in training, although \cref{eq:p_grad} makes progress, its absolute performance still lags far behind the other two configurations because of the significant error accumulation. In contrast, \cref{eq:c_grad} has already suffered from mode collapse, and its diversity continues to deteriorate.

\begin{figure*}[t]
    \centering
    \begin{subfigure}[t]{0.24\linewidth}
        \centering
        \includegraphics[width=\linewidth]{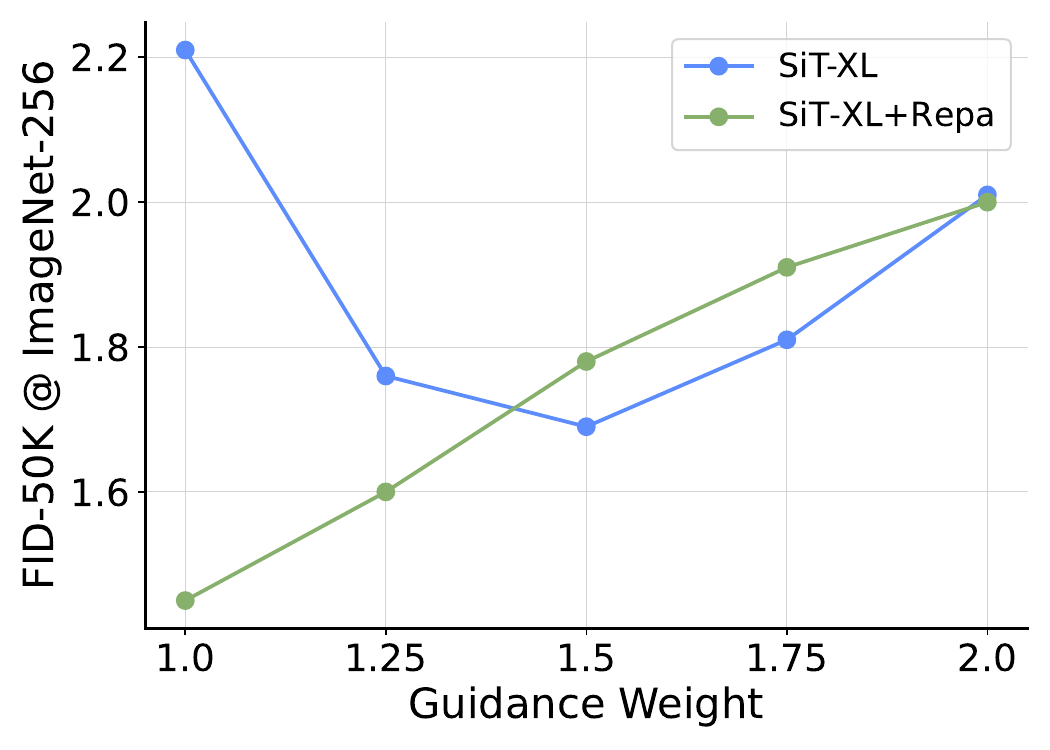}
    \end{subfigure}
    \hfill
    \begin{subfigure}[t]{0.24\linewidth}
        \centering
        \includegraphics[width=\linewidth]{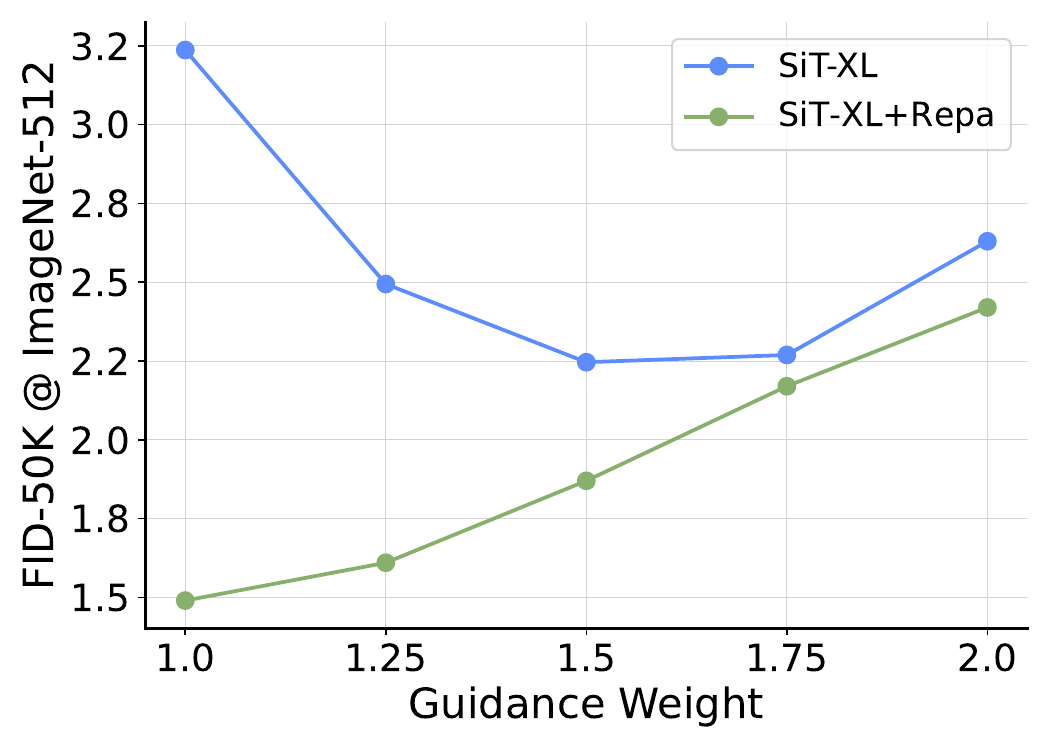}
    \end{subfigure}
    \hfill
    \begin{subfigure}[t]{0.24\linewidth}
        \centering
        \includegraphics[width=\linewidth]{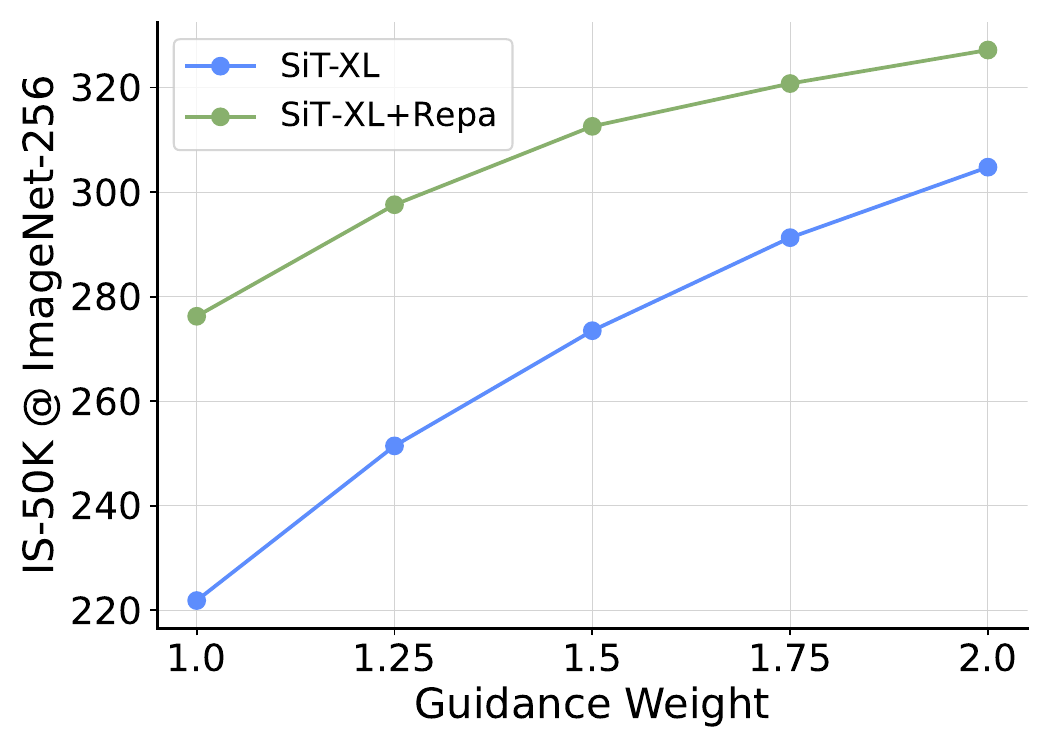}
    \end{subfigure}
    \hfill
    \begin{subfigure}[t]{0.24\linewidth}
        \centering
        \includegraphics[width=\linewidth]{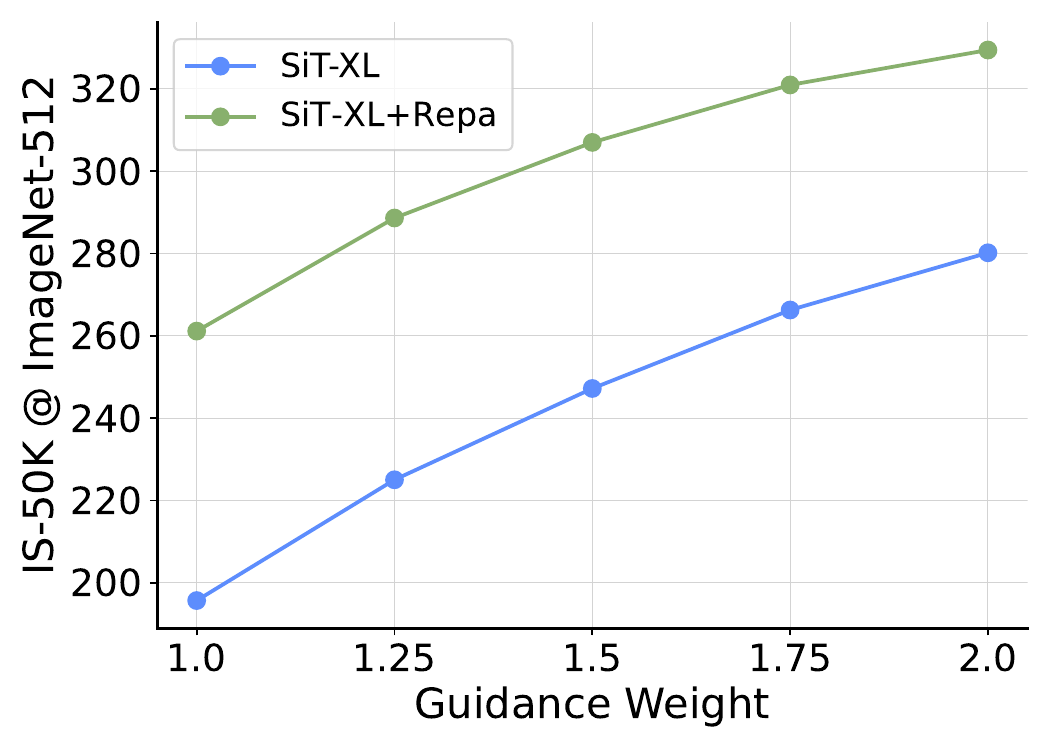}
    \end{subfigure}
    \caption{Performances across different guidance strength parameter $\gamma$ in terms of FID ($\downarrow$) and Inception Score ($\uparrow$).}
    \label{fig:guid}
\end{figure*}

\section{Implementation Details}
\label{sec:implement}

\paragraph{Training.}
We follow the standard practice and train our models in the latent space of the VAE used in~\citet{rombach2022high}. The model architecture we use is based on a standard DiT~\citep{peebles2023scalable} with $2{\times}2$ patches. Recall that the standard input to a flow model for ImageNet is $\vf_\theta(\vx_t, t, c)$ ($c$ is the class label), and we need to include two additional scalar inputs: jump duration $\delta$ and guidance strength $\gamma$. That is, during both training and inference, the model follows $\vf_\theta(\vz, 1, c, \delta, \gamma)$. Thus, we add a few layers to handle these additional conditions. For both, we follow the standard design for including scalar input. Specifically, we use a 256-dimensional frequency embedding~\citep{vaswani2017attention,dhariwal2021diffusion} followed by a two-layer SiLU-activated MLP with the same dimensionality as the model's hidden size. We then add all four embeddings from $t$, $c$, $\delta$, and $\gamma$ together (the newly added ones, $\delta$ and $\gamma$, are initialized at 0) and feed the sum to each block. The same goes for $\vg_\psi$, where it is modified to take input $\vg_\psi(\vx_t, t, c, \gamma)$ as it needs to track different noising flows from different $\gamma$ of $\vf_\theta$. No further architecture changes are necessary for stable and effective training. For each of our reported entries listed in \cref{tab:main}, we present their implementation details in \cref{tab:config}. All model trainings are done with an internal JAX codebase on TPU.

\begin{table*}[t]
    \small
    \centering
    \caption{
        Detailed experimental configurations of our main results.
    }
    \label{tab:config}
    \setlength{\tabcolsep}{20pt}
    \begin{NiceTabular}{
        l
        c
        c
        c
        c
    }
        \toprule
        \multicolumn{5}{l}{\textit{\textbf{Task}}} \\
        \arrayrulecolor{black!30}\midrule
        teacher model & \multicolumn{2}{c}{SiT-XL/2~\citep{ma2024sit}} & \multicolumn{2}{|c}{SiT-XL/2+REPA~\citep{yu2024representation}} \\
        resolution & 256{$\times$}256 & 512{$\times$}512 & 256{$\times$}256 & 512{$\times$}512 \\
        \arrayrulecolor{black}\midrule
        \multicolumn{5}{l}{\textit{\textbf{General}}} \\
        \arrayrulecolor{black!30}\midrule
        iterations (epochs) & 1.5M (300) & 1M (200) & 1.5M (300) & 1M (200) \\
        batch size & \multicolumn{4}{c}{256} \\
        optimizer & \multicolumn{4}{c}{Adam~\citep{kingma2014adam}} \\
        optimizer betas & \multicolumn{4}{c}{(0.9, 0.99)} \\
        optimizer eps & \multicolumn{4}{c}{\num{1e-8}} \\
        weight decay & \multicolumn{4}{c}{0.0} \\
        dropout & \multicolumn{4}{c}{0.0} \\
        $\vf_\theta$ learning rate & \multicolumn{4}{c}{constant \num{3e-5}} \\
        $\vg_\psi$ learning rate & \multicolumn{4}{c}{constant \num{1e-4}} \\
        EMA decay & \multicolumn{4}{c}{0.9999} \\
        \arrayrulecolor{black}\midrule
        \multicolumn{5}{l}{\textit{\textbf{Network}}} \\
        \arrayrulecolor{black!30}\midrule
        params (M) & \multicolumn{4}{c}{678} \\
        FLOPs (G) & 119 & 525 & 119 & 525 \\
        depth & \multicolumn{4}{c}{28} \\
        hidden dim & \multicolumn{4}{c}{1152} \\
        heads & \multicolumn{4}{c}{16} \\
        patch size & \multicolumn{4}{c}{2{$\times$}2} \\
        change from teacher & \multicolumn{4}{c}{additional input for $\delta$ and $\gamma$ in $\vf_\theta$; additional input for $\gamma$ in $\vg_\psi$} \\
        \arrayrulecolor{black}\midrule
        \multicolumn{5}{l}{\textit{\textbf{Training}}} \\
        \arrayrulecolor{black!30}\midrule
        \multicolumn{5}{l}{\cellcolor{gray!10}\textit{Specific to \cref{eq:p_grad}}} \\
        \arrayrulecolor{gray!10}\midrule
        confident region warmup duration & \multicolumn{4}{c}{10K} \\
        $\delta$ type & \multicolumn{4}{c}{discrete; uniform; N=8} \\
        $\delta$ sampling & \multicolumn{2}{c}{LogitNormal(0, 1)} & \multicolumn{2}{|c}{LogitNormal(-0.4, 1.2)} \\
        \% of $\delta=0$ & \multicolumn{4}{c}{10\%} \\
        $\vu$ type & \multicolumn{4}{c}{Heun solver~\citep{karras2022elucidating}} \\
        $k$ & \multicolumn{4}{c}{1} \\
        \arrayrulecolor{black!30}\midrule
        \multicolumn{5}{l}{\cellcolor{gray!10}\textit{Specific to \cref{eq:c_grad}}} \\
        \arrayrulecolor{gray!10}\midrule
        $r$ sampling & \multicolumn{4}{c}{LogitNormal(0.8, 1.6)} \\
        guidance interval & {[0, 0.4]} & {[0, 0.5]} & {[0, 0.3]} & {[0, 0.3]} \\
        \arrayrulecolor{black!30}\midrule
        \multicolumn{5}{l}{\cellcolor{gray!10}\textit{Relevant to both \cref{eq:p_grad,eq:c_grad}}} \\
        \arrayrulecolor{gray!10}\midrule
        split between \cref{eq:p_grad,eq:c_grad} & \multicolumn{4}{c}{75\% : 25\%} \\
        $\gamma$ range & \multicolumn{4}{c}{{[1, 2]}} \\
        $\alpha_\text{ref}$ & \multicolumn{2}{c}{0.3} & \multicolumn{2}{|c}{0.6} \\
        $\alpha_\text{ref}$ schedule ($T_\text{delay}$, $T_\text{warmup}$, $T_\text{decay}$) & \multicolumn{2}{c}{(10K, 10K, $\infty$)} & \multicolumn{2}{|c}{(10K, 10K, 25K)} \\
        \arrayrulecolor{black}\bottomrule
    \end{NiceTabular}
\end{table*}

\paragraph{Evaluation.} We observe small performance variations between TPU-based FID evaluation and GPU-based FID evaluation (ADM’s TensorFlow evaluation suite~\citep{dhariwal2021diffusion}\footnote{\url{https://github.com/openai/guided-diffusion/tree/main/evaluations}}). To ensure a fair comparison with the baseline methods, we convert all of our models into PyTorch, sample all of our models on GPU, and obtain FID scores using the ADM evaluation suite for reporting the final results of our XL-size models in \cref{tab:main,tab:synergy} and \cref{fig:guid}. Additionally, since our model is trained on a range of guidance strengths $\gamma\sim\gU(1,2)$, we can efficiently sweep for an optimal value during inference. We report the best FID in \cref{tab:main}, and provide the complete performance curves in \cref{fig:guid}.

\section{Related Work}
\label{sec:related}

Among the existing distillation approaches, BOOT~\citep{gu2023boot} stands as the most closely related precursor, sharing the distinct operational characteristic of being data-free. However, our works diverge fundamentally in their conceptual positioning and the identified imperative for removing data. BOOT frames the data-free property primarily as a practical/logistical advantage, emphasizing the benefits of bypassing the storage and privacy burdens associated with massive, proprietary training sets. In contrast, we argue that the exclusion of data is not merely a convenience but a theoretical necessity for ensuring distributional fidelity. We elevate the data-free paradigm from a strategy of efficiency to one of correctness, presenting it as the rigorous solution to the identified Teacher-Data Mismatch. Our proposed method also differs significantly from BOOT and its subsequent improvements, which are not necessarily data-free in nature. In particular, \citet{gu2023boot} focuses on the specific signal-ODE parameterization, which requires a separate loss just to enforce the boundary condition $\vf_\theta(\vz,0)=\vz$. In comparison, our prediction objective stems from the properties of average velocity~\citep{geng2025mean}, which satisfy the boundary condition by design. \citet{tee2024physics} and the Lagrangian objective in \citet{boffi2025flow} consider more general ODE formulations. Their optimization involves costly computations of the gradients over the partial derivatives $\partial_\delta\vf_\theta$, whereas ours does not (the partial derivatives in \cref{eq:p_loss_c} are placed inside the stop-gradient operation). This improved training efficiency also originates from the average velocity perspective and our deduced identity in \cref{eq:F_identity}. Furthermore, we introduce an auxiliary correction objective for the accumulated prediction errors, pushing the model performance beyond the current state-of-the-art. In doing so, we believe our work finally completes the picture, validating the data-free paradigm as a robust and promising foundation for the future of generative model acceleration.

Our contribution sits within a much broader body of literature dedicated to accelerating diffusion and flow models. These techniques generally fall into two categories based on their distillation targets. The first category operates at the \textbf{trajectory level}, attempting to compress the complex ODE integration into fewer steps by directly mimicking the sampling path or its solution operator~\citep{luhman2021knowledge,zheng2023fast}. The foundational work of Progressive Distillation~\citep{salimans2022progressive,meng2023distillation} further established the viability of this direction through an iterative strategy that progressively halves the required sampling steps. This paradigm was significantly expanded by Consistency Models~\citep{song2023consistency,berthelot2023tract,song2024improved,lu2024simplifying}, which enforce a property of self-consistency along the trajectory, allowing the model to map arbitrary intermediate states directly to the data origin. More recent approaches~\citep{geng2024consistency,kim2024consistency,heek2024multistep,lee2024truncated,peng2025flow,boffi2025flow,geng2025mean,sabour2025align,liu2025learning,chen2025pi,liu2023flow,lee2025decoupled,liu2023instaflow,ren2024hyper} have further refined this objective by formulating direct matching conditions between the student’s transport map and the teacher’s vector field. The second category operates at the \textbf{distribution level}~\citep{yin2024one,yin2024improved,zhou2024score,zhou2024adversarial,sauer2024adversarial,sauer2024fast,xu2024ufogen,xie2024distillation,luo2023diff,luo2024one,xu2025one,salimans2024multistep}, where the student is trained to match the teacher's marginal distribution directly, often utilizing adversarial or score-based objectives without strictly adhering to the teacher's specific trajectory. We make contributions in both directions by proposing an efficient algorithm for distilling trajectories without data and elucidating additional design spaces for better distribution matching objectives. Together, our proposed predictor-corrector framework can be seen as combining the strengths of the two categories~\citep{lu2024simplifying,zheng2025large}, achieving superior quality while maintaining desired diversity, all without reliance on external data.

\section{Additional Visual Results}
\label{sec:visual}

In \cref{fig:sample_15,fig:sample_250,fig:sample_425,fig:sample_527,fig:sample_537,fig:sample_555,fig:sample_975,fig:sample_980}, we present additional uncurated samples generated by FreeFlow-XL/2 at 512${\times}$512 resolution with only 1-NFE. Again, we emphasize that, during training, we only make use of the teacher model (SiT-XL/2+REPA~\citep{yu2024representation}), without querying any samples from ImageNet.

\begin{figure*}[t]
    \centering
    \begin{minipage}{0.46\linewidth}
        \centering
        \includegraphics[width=\linewidth]{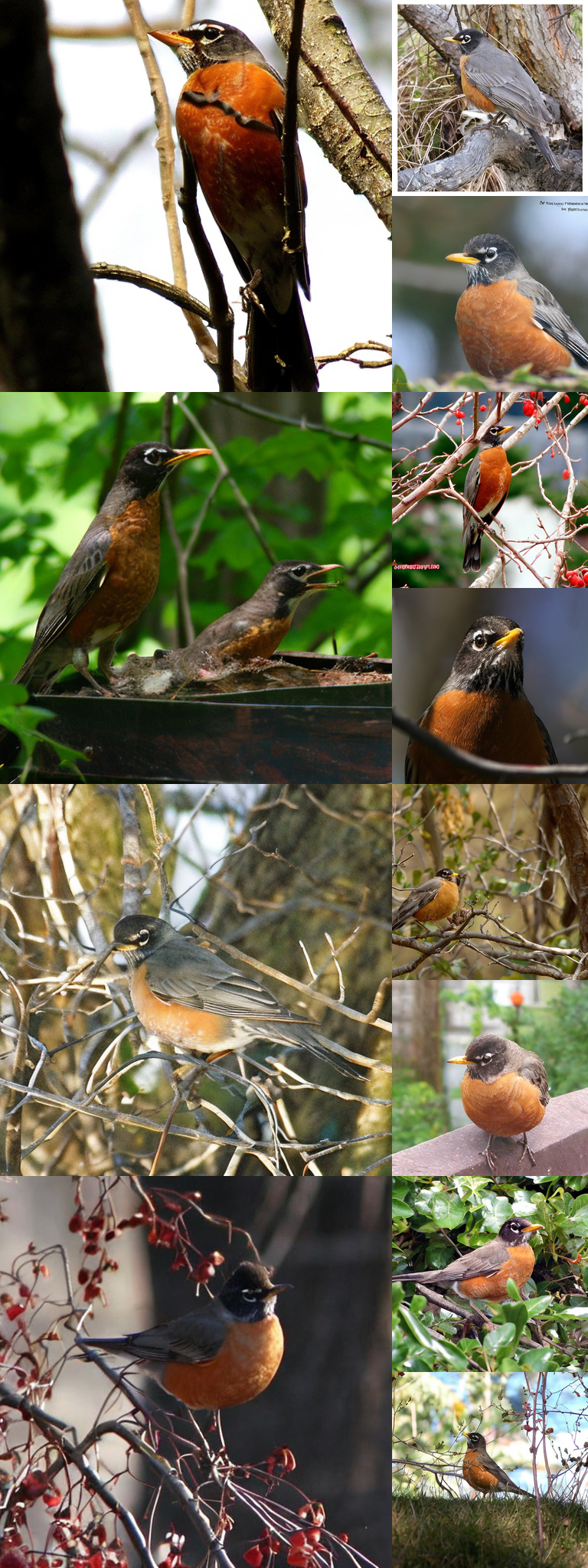}
        \caption{\textbf{Uncurated 512${\times}$512 samples by FreeFlow, 1-NFE.} \\Class label = ``robin'' (15)}
        \label{fig:sample_15}
    \end{minipage}
    \hfill
    \begin{minipage}{0.46\linewidth}
        \centering
        \includegraphics[width=\linewidth]{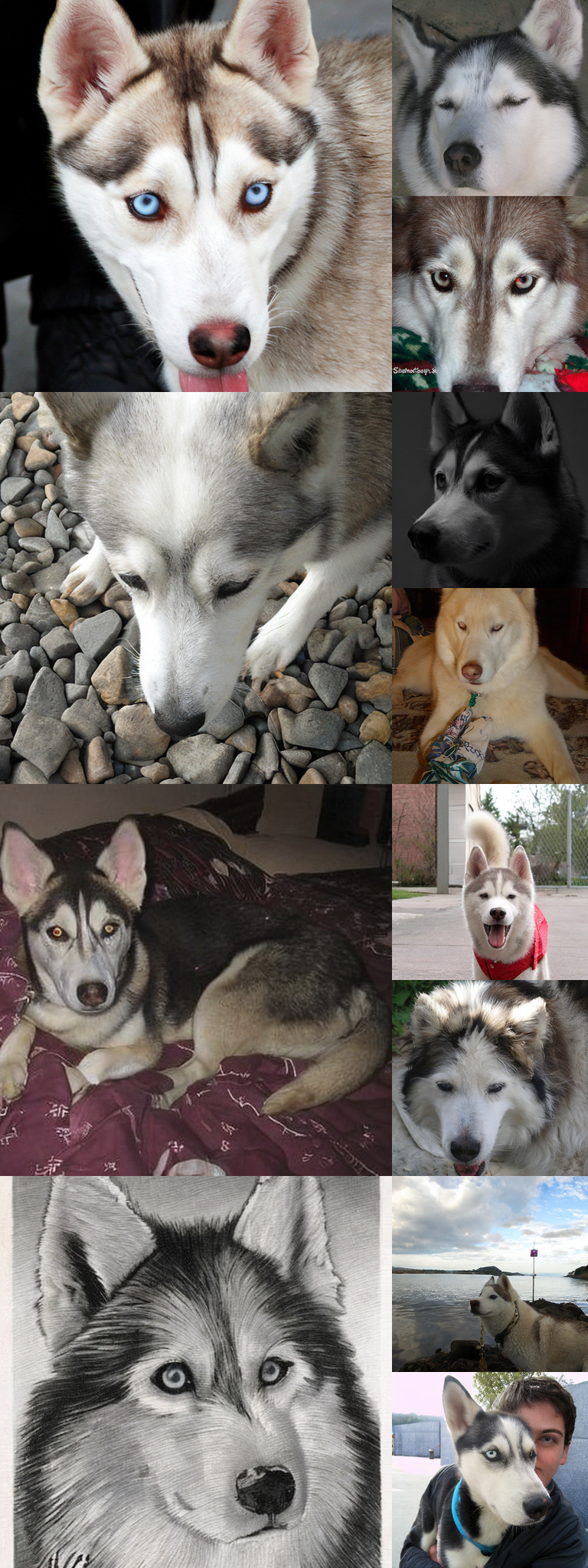}
        \caption{\textbf{Uncurated 512${\times}$512 samples by FreeFlow, 1-NFE.} \\Class label = ``Siberian husky'' (250)}
        \label{fig:sample_250}
    \end{minipage}
\end{figure*}

\begin{figure*}[t]
    \centering
    \begin{minipage}{0.46\linewidth}
        \centering
        \includegraphics[width=\linewidth]{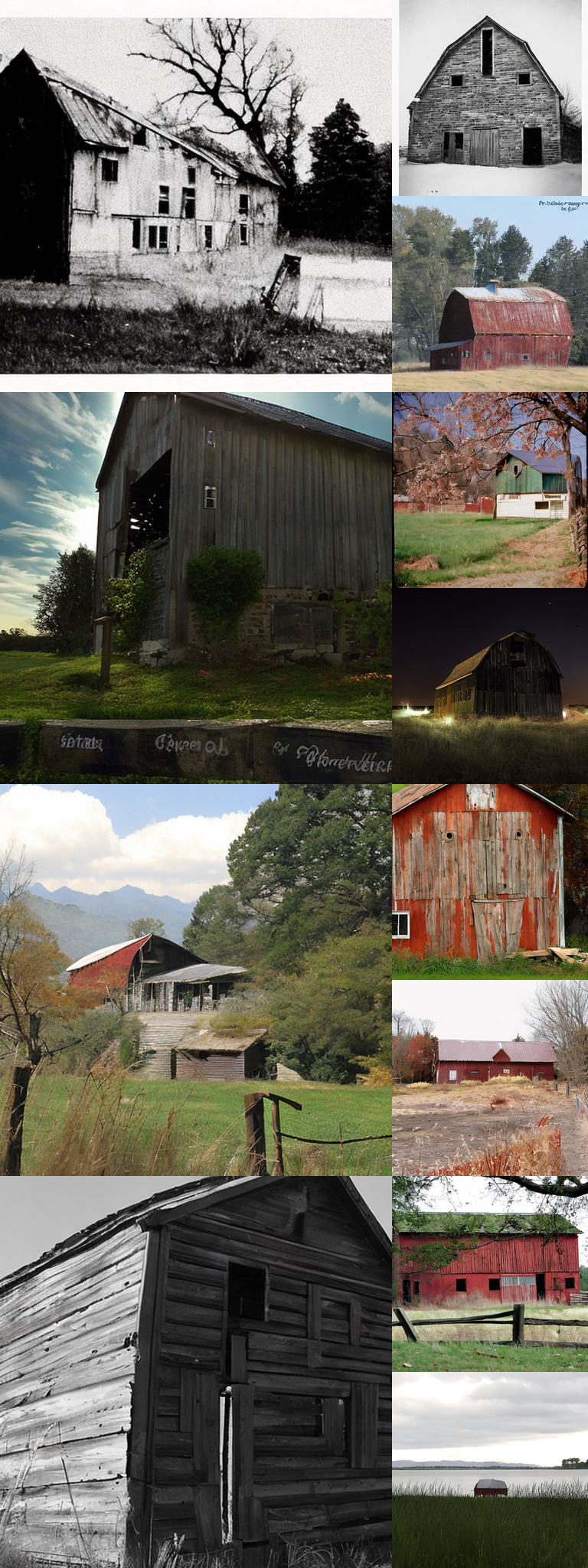}
        \caption{\textbf{Uncurated 512${\times}$512 samples by FreeFlow, 1-NFE.} \\Class label = ``barn'' (425)}
        \label{fig:sample_425}
    \end{minipage}
    \hfill
    \begin{minipage}{0.46\linewidth}
        \centering
        \includegraphics[width=\linewidth]{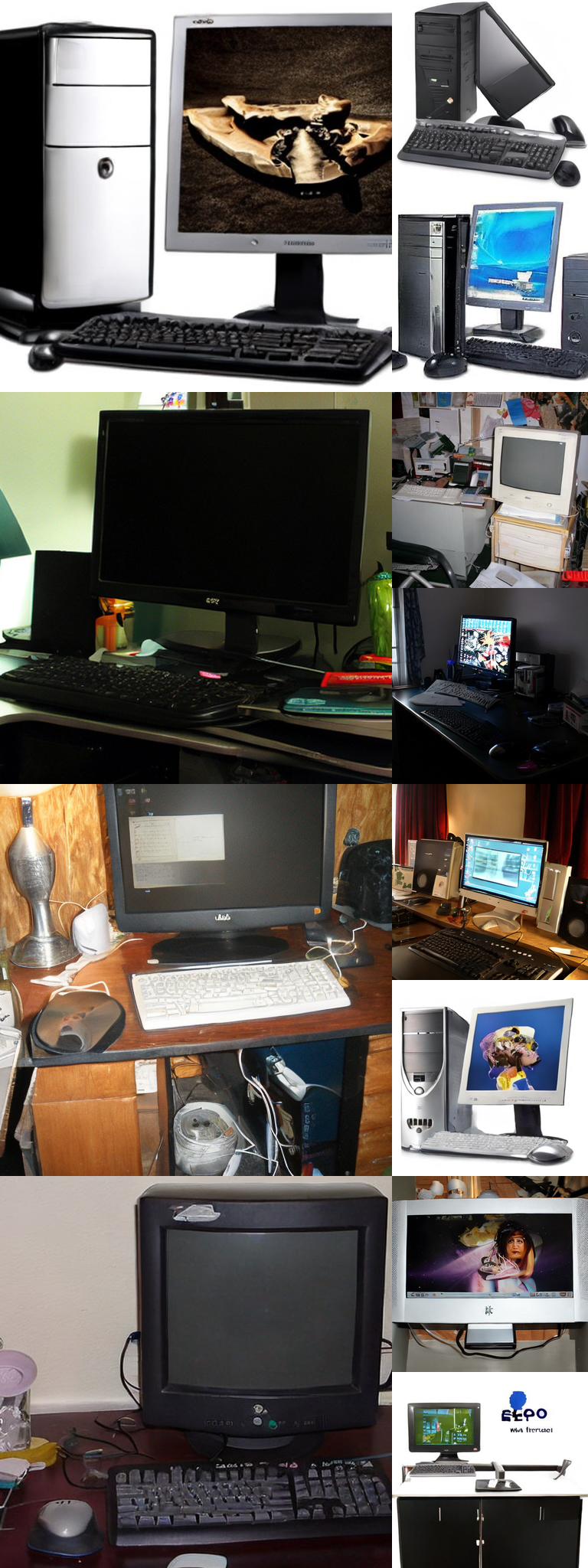}
        \caption{\textbf{Uncurated 512${\times}$512 samples by FreeFlow, 1-NFE.} \\Class label = ``desktop computer'' (527)}
        \label{fig:sample_527}
    \end{minipage}
\end{figure*}

\begin{figure*}[t]
    \centering
    \begin{minipage}{0.46\linewidth}
        \centering
        \includegraphics[width=\linewidth]{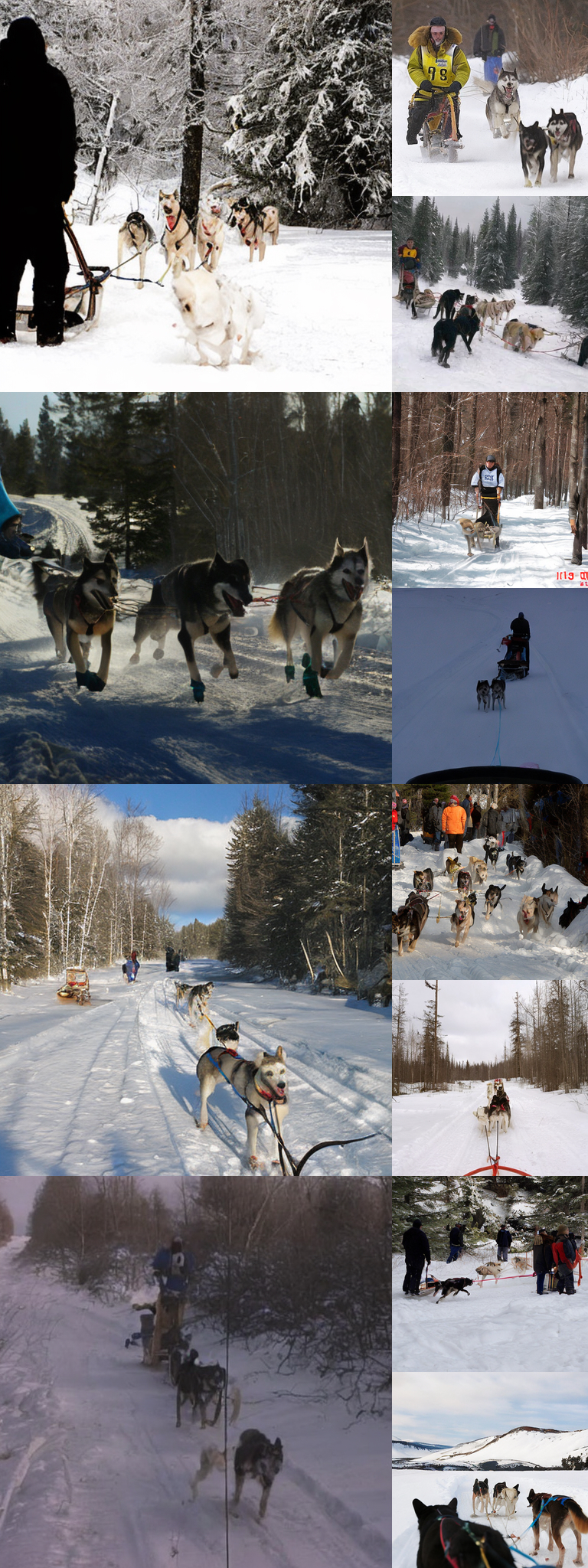}
        \caption{\textbf{Uncurated 512${\times}$512 samples by FreeFlow, 1-NFE.} \\Class label = ``dogsled'' (537)}
        \label{fig:sample_537}
    \end{minipage}
    \hfill
    \begin{minipage}{0.46\linewidth}
        \centering
        \includegraphics[width=\linewidth]{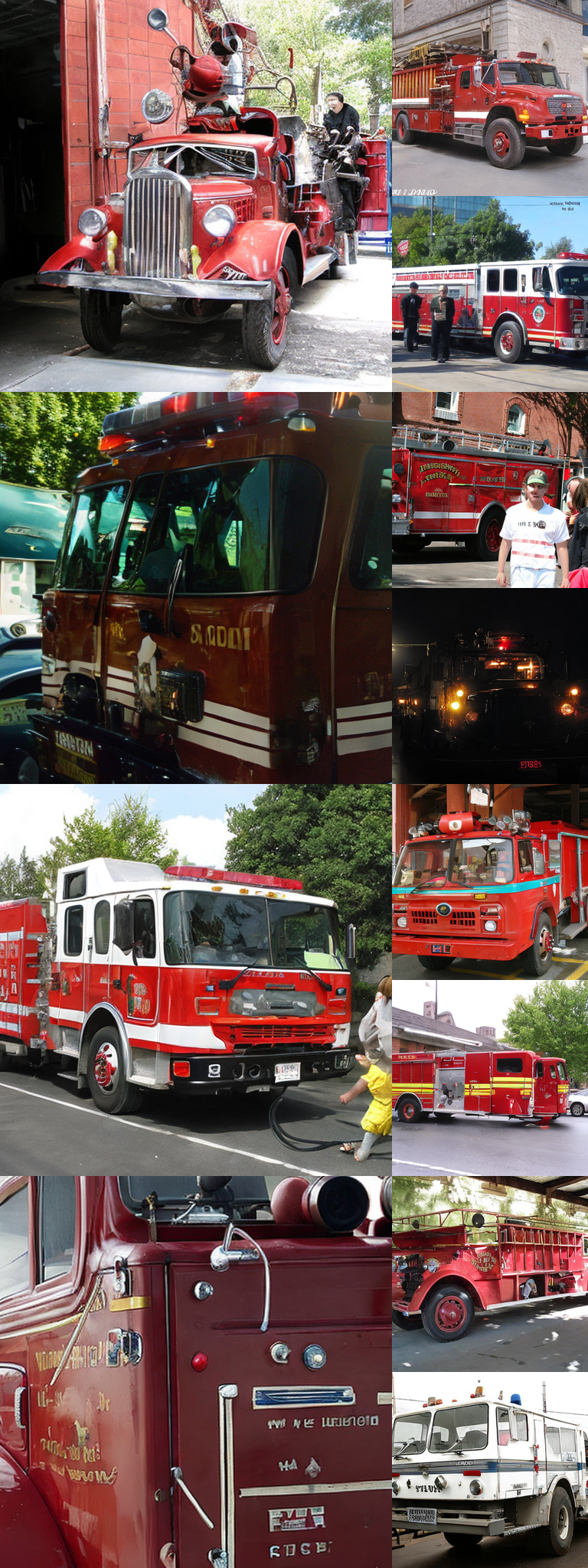}
        \caption{\textbf{Uncurated 512${\times}$512 samples by FreeFlow, 1-NFE.} \\Class label = ``fire truck'' (555)}
        \label{fig:sample_555}
    \end{minipage}
\end{figure*}

\begin{figure*}[t]
    \centering
    \begin{minipage}{0.46\linewidth}
        \centering
        \includegraphics[width=\linewidth]{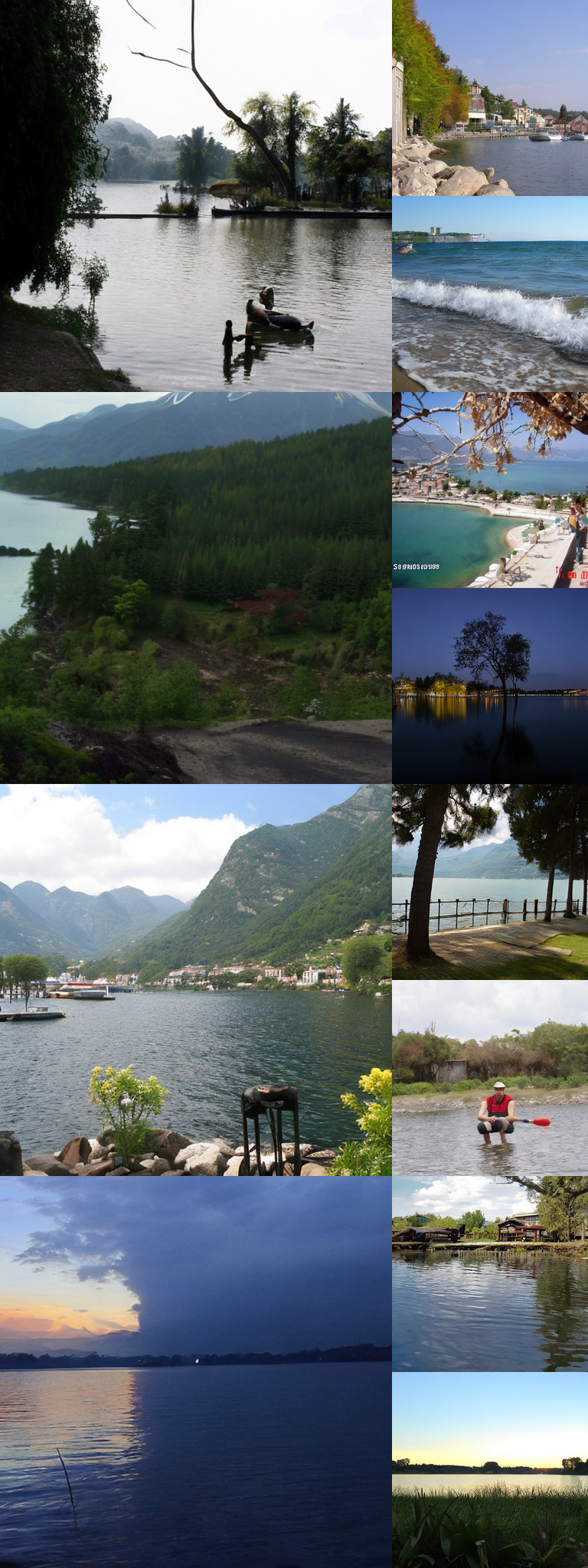}
        \caption{\textbf{Uncurated 512${\times}$512 samples by FreeFlow, 1-NFE.} \\Class label = ``lakeside'' (975)}
        \label{fig:sample_975}
    \end{minipage}
    \hfill
    \begin{minipage}{0.46\linewidth}
        \centering
        \includegraphics[width=\linewidth]{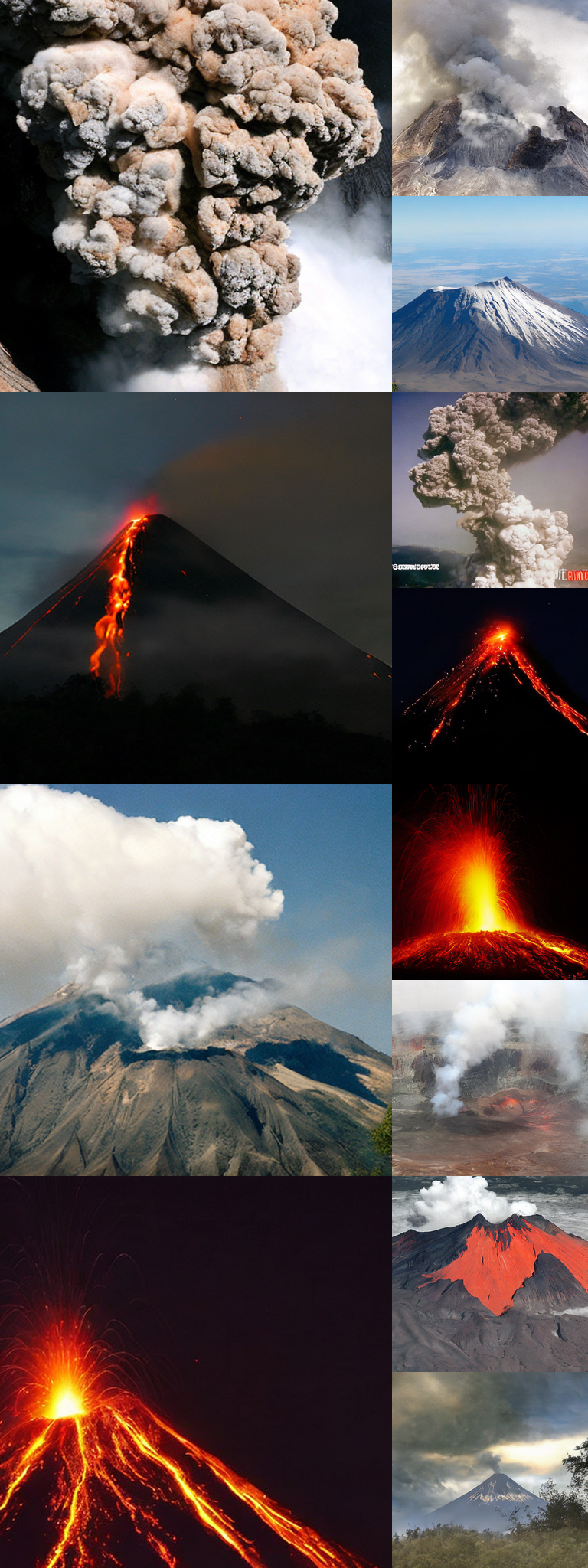}
        \caption{\textbf{Uncurated 512${\times}$512 samples by FreeFlow, 1-NFE.} \\Class label = ``volcano'' (980)}
        \label{fig:sample_980}
    \end{minipage}
\end{figure*}

\end{document}